\documentclass[10pt,twocolumn,letterpaper]{article}
\usepackage[utf8]{inputenc}
\usepackage[T1]{fontenc}
\usepackage{balance}
\usepackage[pagenumbers]{cvpr} % To force page numbers, e.g. for an arXiv version

\usepackage{amsmath,amsfonts,bm}

\def\eqref#1{equation~\ref{#1}}
\def\1{\bm{1}}

\DeclareMathAlphabet{\mathsfit}{\encodingdefault}{\sfdefault}{m}{sl}
\SetMathAlphabet{\mathsfit}{bold}{\encodingdefault}{\sfdefault}{bx}{n}

\usepackage{hyperref}
\usepackage{url}
\usepackage{multirow}

\usepackage{booktabs} % for professional tables
\usepackage{amsthm}
\usepackage{tikz}
\usepackage{graphicx}
\usepackage{readarray}
\usepackage{pgfplots}
\usepackage{pgfplotstable}
\pgfplotsset{compat=1.18}
\usepackage{bm}
\usetikzlibrary{shapes.misc, positioning, calc, arrows.meta,patterns,patterns.meta}
\usepackage{colortbl}
\usepackage{float}
\usepackage{listings}
\usepackage{subcaption}
\usepackage{enumitem}
\usepackage[table,dvipsnames]{xcolor}
\usepackage{makecell}
\usepackage{epigraph}
\setlist{nosep}

\definecolor{codegreen}{rgb}{0,0.6,0}
\definecolor{codegray}{rgb}{0.5,0.5,0.5}
\definecolor{codepurple}{rgb}{0.58,0,0.82}
\definecolor{backcolour}{rgb}{0.95,0.95,0.92}
\lstdefinestyle{mystyle}{
    backgroundcolor=\color{backcolour},   
    commentstyle=\color{codegreen},
    keywordstyle=\color{magenta},
    numberstyle=\tiny\color{codegray},
    stringstyle=\color{codepurple},
    basicstyle=\ttfamily\footnotesize,
    breakatwhitespace=false,         
    breaklines=true,                 
    captionpos=b,                    
    keepspaces=true,                 
    numbers=left,                    
    numbersep=5pt,                  
    showspaces=false,                
    showstringspaces=false,
    showtabs=false,                  
    tabsize=2
}

\newcommand{\greyrule}{\arrayrulecolor{black!30}\midrule\arrayrulecolor{black}}

\DeclareMathOperator{\PoM}{\text{PoM}}

\newcommand{\pomicon}{%
  \raisebox{-0.2ex}{\includegraphics[height=1em]{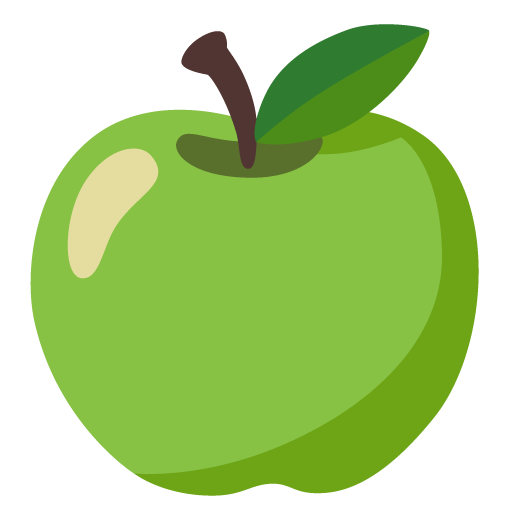}}%
}

\theoremstyle{plain}
\newtheorem{theorem}{Theorem}[section]
\newtheorem{proposition}[theorem]{Proposition}
\newtheorem{lemma}[theorem]{Lemma}

\theoremstyle{definition}

\theoremstyle{remark}

\definecolor{colPOM}{RGB}{102,200,0}      % crisp apple green
\definecolor{colTrans}{RGB}{70,110,220}     
\definecolor{colFlash}{RGB}{128,0,128}    % keep purple

\newcommand{\tokengrid}[4]{%
\begin{tikzpicture}[scale=0.5]
  \foreach \i in {0,...,\numexpr#2-1} {
    \foreach \j in {0,...,\numexpr#1-1} {
      \draw[fill=#4, rounded corners, very thick] 
        (\j,-\i *\ysep) rectangle ++(#3,1);
    }%
  }%
\end{tikzpicture}%
}%
\definecolor{cvprblue}{rgb}{0.21,0.49,0.74}

\usepackage[skip=6pt]{caption}
\title{\pomicon PoM: A Linear-Time Replacement for Attention with the Polynomial Mixer}

\author{David Picard$^1$, Nicolas Dufour$^{1,2}$, Lucas Degeorge$^{1,2,4}$, Arijit Ghosh$^1$, Davide Allegro$^3$, Tom Ravaud$^1$,\\ Yohann Perron$^{1,5}$, Corentin Sautier$^{1,6}$, Zeynep Sonat Baltaci$^1$, Fei Meng$^1$, Syrine Kalleli$^1$, Marta \\
López-Rauhut$^1$, Thibaut Loiseau$^1$, Ségolène Albouy$^1$, Raphael Baena$^1$, Elliot Vincent$^7$, Loic Landrieu$^1$\\
{\small $^1$LIGM, CNRS, Univ Gustave Eiffel, ENPC, Institut Polytechnique de Paris, France}\\
{\small $^2$LIX, École Polytechnique, CNRS, IP Paris, France}
\quad{\small $^3$Department of Information Engineering, Universit\`a degli Studi di Padova, Italy}\\
\small $^4$AMIAD, Pole recherche,\quad$^5$ EFEO$\quad^6$Valeo.ai, France,\quad
{\small $^7$LASTIG, Univ Gustave Eiffel, IGN, Géodata Paris, Paris, France}
}
\begin{document}
\twocolumn[{%
 \renewcommand\twocolumn[1][]{#1}
 \maketitle
 \vspace{-8mm}
    \centering 
     \noindent
\begin{minipage}[t]{0.33\textwidth}
\centering
\begin{tikzpicture}
\begin{semilogxaxis}[
    width=.9\linewidth,
    height=4.0cm,
    xlabel={\scriptsize Sequence length (tokens)},
    ylabel={\scriptsize Execution time (s/batch)},
    xtick={256,1024,4096,16384,65536},
    xticklabels={$2^{8}$,$2^{10}$,$2^{12}$,$2^{14}$,$2^{16}$,$2^{18}$},
    ytick={100,200},
    yticklabels={0.1,0.2},
    ymin=0, ymax=250,
    grid=both,
    grid style={gray!20},
    legend style={
        at={(0.0,1.0)},
        anchor=north west,
        legend columns=1,
        font=\scriptsize,
        draw=none,
        fill=none,
        legend cell align=left,
    },
    tick label style={font=\scriptsize},
    label style={font=\scriptsize},
    thick,
    mark options={scale=0.9},
    scale only axis,
    ylabel style={yshift=-5pt},
]

\addplot[
    color=colPOM,
    mark=square*,
    line width=1pt
] coordinates {
    (256, 5.415)
    (512, 5.403)
    (1024,5.411)
    (2048,5.437)
    (4096,5.594)
    (8192,5.985)
    (16384,6.934)
    (32768,8.970)
    (65536,13.171)
};
\addlegendentry{\pomicon\ PoM}

\addplot[
    color=colTrans,
    mark=o,
    line width=1pt
] coordinates {
    (256,  23.002)
    (512, 35.250)
    (1024,59.793)
};
\addlegendentry{Self-attention}
 \addplot[
     color=colTrans,
     dotted,
     line width=1pt,
     forget plot
 ] coordinates {
     (1024,59.793)
     (1500,100)
 };
\node[text=red, anchor=west] at (axis cs:1500,100) {\scriptsize OOM};

\addplot[
    color=colFlash,
    mark=o,
    line width=1pt
] coordinates {
    (256, 4.756)
    (512, 5.681 )
    (1024,7.511)
    (2048,11.099)
    (4096,18.207)
    (8192,32.317)
    (16384,60.612 )
    (32768,117.260)
    (65536,230.486)
};
\addlegendentry{Self-attention w/ FlashAttention}

\addplot[
    color=colPOM,
    mark=square*,
    line width=1pt,
    forget plot
] coordinates {
    (256, 5.415)
    (512, 5.403)
    (1024,5.411)
    (2048,5.437)
    (4096,5.594)
    (8192,5.985)
    (16384,6.934)
    (32768,8.970)
    (65536,13.171)
    };
\end{semilogxaxis}
\end{tikzpicture}

{\centering \footnotesize a) Speed of attention module (batch of $2^{18}$ tokens)}
\end{minipage}
\hfill
\begin{minipage}[t]{0.33\textwidth}
\centering
\hspace*{2mm}
\begin{tikzpicture}
\begin{semilogxaxis}[
    width=.9\linewidth,
    height=4.0cm,
    xlabel={\scriptsize Sequence length (tokens)},
    ylabel={\scriptsize Execution time (s/batch)},
    xtick={1024,4096,16384,65536},
    xticklabels={$2^{10}$,$2^{12}$,$2^{14}$,$2^{16}$},
    ymin=0, ymax=2600,
    grid=both,
    grid style={gray!20},
    legend style={
        at={(0.02,0.98)},
        anchor=north west,
        legend columns=1,
        font=\scriptsize,
        draw=none,
        fill=none,
        legend cell align=left
    },
    ytick={1000,2000},
    yticklabels={1,2},
    tick label style={font=\scriptsize},
    label style={font=\scriptsize},
    thick,
    mark options={scale=0.9},
    scale only axis,
    ylabel style={yshift=-3pt},
]

\addplot[
    color=colFlash,
    mark=o,
    line width=1pt
] coordinates {
    (1024,115.7618)
    (2048,152.1319)
    (4096,225.6129)
    (8192,373.0753)
    (16384,669.9945)
    (32768,1272.3304)
    (65536,2506.8214)
};
\addlegendentry{GPT2-S w/ FlashAttention}

\addplot[
    color=colPOM,
    mark=square*,
    line width=1pt,
    mark options={
        fill=colFlash,
        draw=colFlash 
    }
] coordinates {
    (1024,84.6557)
    (2048,100.2772)
    (4096,131.0474)
    (8192,193.6755)
    (16384,318.2232)
    (32768,570.6613)
    (65536,1082.3575)
};
\addlegendentry{GPPoM2-S Hyb}

\addplot[
    color=colPOM,
    mark=square*,
    line width=1pt
]
coordinates {
    (1024,69.0017)
    (2048,73.9689)
    (4096,84.2516)
    (8192,105.6855)
    (16384,147.1300)
    (32768,229.4805)
    (65536,393.1584)
};
\addlegendentry{\pomicon\ GPPoM2-S}

\end{semilogxaxis}
\end{tikzpicture}
{\hspace*{4mm}\centering  \footnotesize b) Speed of full model (batch of $2^{16}$ tokens)}
\end{minipage}
\hfill
\begin{minipage}[t]{0.33\textwidth}
\centering
\begin{tikzpicture}
\newcommand{\xtask}[3]{%
  \begin{tabular}{@{}c@{}}
     ~\\[-5mm]
    \rule{0pt}{1.1em}#1\\[-1.4mm]
    \rule{0pt}{1.0em}{\tiny #2}\\[-1.4mm]
    \rule{0pt}{1.0em}{\tiny #3}
  \end{tabular}%
}
\begin{axis}[
    width=.78\linewidth,
    height=3.55cm,
    ybar,
    ymin=0, ymax=115,
    ytick=\empty,
    symbolic x coords={NLP,OCR,3D,EO,ImGen},
    xtick={NLP,OCR,3D,EO,ImGen},
    xticklabels={\xtask{NLP}{\tiny{HellaSwag}}{\tiny{AccNorm $\uparrow$}},
    \xtask{OCR}{LAM}{WER $\downarrow$}, \xtask{3D}{SemKITTI}{mIoU$\uparrow$},    \xtask{EO}{PASTIS}{mIoU$\uparrow$},
    \xtask{ImgGen}{ImgNet}{FID $\downarrow$}},
    bar width=5pt,
    enlarge x limits=0.15,
    grid=both,
    thick,
    grid style={gray!20},
    legend style={
        at={(0.5,-0.22)},
        anchor=north,
        legend columns=3,
        font=\scriptsize,
        draw=none,
        fill=none
    },
    tick label style={font=\scriptsize},
    scale only axis,
]

% --- Categories with 3 bars: NLP, OCR, 3D ---
\addplot[
    ybar,
    bar shift=-7pt,
    fill=colTrans,
    draw=colTrans,
    nodes near coords,
    nodes near coords style={font=\tiny, xshift=-2pt, yshift=-1pt},
    point meta=explicit symbolic
] coordinates {
    (NLP,100) [33.3]
};

\addplot[
    ybar,
    bar shift=-7pt,
    fill=colTrans,
    draw=colTrans,
    nodes near coords,
    nodes near coords style={font=\tiny, xshift=-1pt, yshift=-0pt},
    point meta=explicit symbolic,
    forget plot
] coordinates {
    (OCR,100) [2.8]
};

\addplot[
    ybar,
    bar shift=-7pt,
    forget plot,
    fill=colTrans,
    draw=colTrans,
    nodes near coords,
    nodes near coords style={font=\tiny, xshift=-3pt,
    },
    point meta=explicit symbolic,
    forget plot
] coordinates {
    (3D,100) [67.2]
};

\addplot[
    ybar,
    bar shift=0pt,
    fill=colPOM,
    draw=colPOM,
    postaction={
        pattern={Lines[angle=45,distance=3pt,line width=1.2pt]},
        pattern color=colTrans,
    },
    nodes near coords,
    nodes near coords style={font=\tiny, yshift=+0pt, yshift=1pt},
    point meta=explicit symbolic,
    forget plot
] coordinates {
    (NLP,101.5) [33.8]
};

\addplot[
    ybar,
    bar shift=0pt,
    fill=colPOM,
    draw=colPOM,
    postaction={
        pattern={Lines[angle=45,distance=3pt,line width=1.2pt]},
        pattern color=colTrans,
    },
    nodes near coords,
    nodes near coords style={font=\tiny, xshift=+1pt, yshift=0pt},
    point meta=explicit symbolic,
    forget plot
] coordinates {
    (OCR,100) [2.8]
};

\addplot[
    ybar,
    bar shift=0pt,
    fill=colPOM,
    draw=colPOM,
    postaction={
        pattern={Lines[angle=45,distance=3pt,line width=1.2pt]},
        pattern color=colTrans,
    },
    nodes near coords,
    nodes near coords style={font=\tiny, yshift=+0pt, yshift=2pt},
    point meta=explicit symbolic
] coordinates {
    (3D,99.9) [67.6]
};

\addplot[
    ybar,
    bar shift=+7pt,
    fill=colPOM,
    draw=colPOM,
    nodes near coords,
    nodes near coords style={font=\tiny, yshift=0pt, xshift=+0pt},
    point meta=explicit symbolic
] coordinates {
    (NLP,82.3) [27.4]
    (OCR,89.2) [3.2]
};

\addplot[
    ybar,
    bar shift=+7pt,
    fill=colPOM,
    draw=colPOM,
    nodes near coords,
    nodes near coords style={font=\tiny, yshift=-2pt, xshift=+2pt},
    point meta=explicit symbolic
] coordinates {
    (3D,99.1) [66.6]
};

% --- Categories with only 2 bars: EO, ImGen ---
\addplot[
    ybar,
    bar shift=-3.5pt,
    fill=colTrans,
    draw=colTrans,
    nodes near coords,
    nodes near coords style={font=\tiny, xshift=+3.5pt},
    point meta=explicit symbolic
] coordinates {
    (EO,100) [64.6]
    (ImGen,100) [17.2]
};

\addplot[
    ybar,
    bar shift=3.5pt,
    fill=colPOM,
    draw=colPOM,
    nodes near coords,
    nodes near coords style={font=\tiny, xshift=1.5pt},
    point meta=explicit symbolic
] coordinates {
    (EO,100.4)
    (ImGen,100)
};

\legend{Transformer,Hybrid,\pomicon\ PoM}
\end{axis}
\end{tikzpicture}

{\centering \footnotesize c) Performance across domains}
\end{minipage}

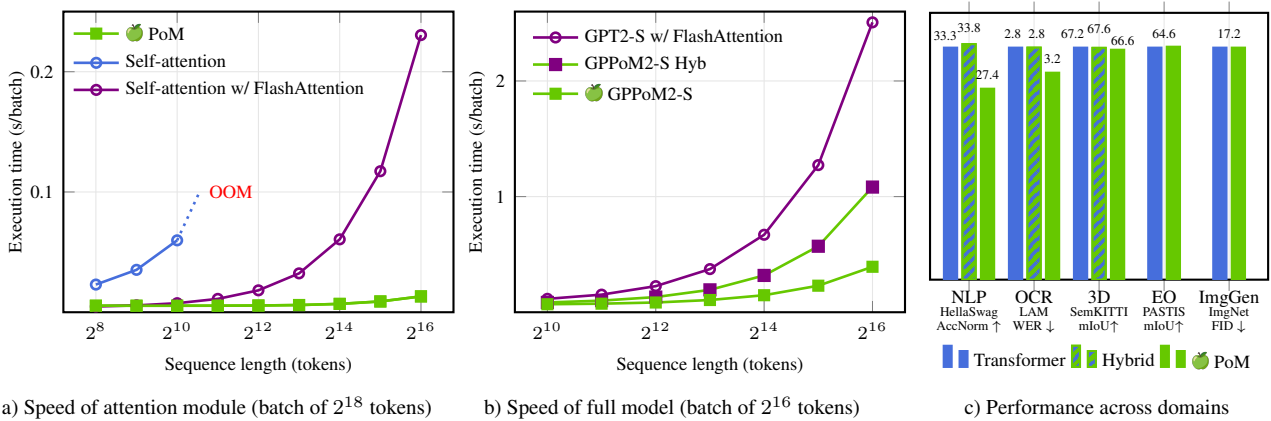
\captionof{figure}{\textbf{Polynomial Mixer (\pomicon PoM).}
Inference time on an H100 as a function of sequence length, shown for the PoM module alone (a) and within a full Transformer pipeline (b). 
PoM scales nearly linearly and is significantly faster than self-attention for long sequences, even with FlashAttention.
As measured for NLP, OCR, 3D point cloud segmentation, Earth Observation analysis,  and image generation (c), replacing attention with PoM or hybrid variants preserves accuracy while improving scalability.}

    \label{fig:teaser}
    ~\\[1mm]
 }
 ]

\begin{abstract}
This paper introduces the \emph{Polynomial Mixer} (PoM), a novel token mixing mechanism with linear complexity that serves as a drop-in replacement for self-attention. PoM aggregates input tokens into a compact representation through a learned polynomial function, from which each token retrieves contextual information. We prove that PoM satisfies the \emph{contextual mapping property}, ensuring that transformers equipped with PoM remain universal sequence-to-sequence approximators. We replace standard self-attention with PoM across five diverse domains: text generation, handwritten text recognition, image generation, 3D modeling, and Earth observation. PoM matches the performance of attention-based models while drastically reducing computational cost when working with long sequences. The code is available at \url{https://github.com/davidpicard/pom}.
\end{abstract}

\section{Introduction}
\epigraph{One apple is a supper; one apple is life.}{- \emph{Les Misérables, Book Fourth}, {Victor Hugo}}

Multi-Head Attention (MHA)~\cite{vaswani17nips} is the key ingredient to recent advances in AI, powering large language models~\cite{radford2019language,dubey2024llama,team2024gemma}, generative models for images and videos~\cite{peebles23iccv,ma24eccv,yang2024cogvideox}, as well as breakthroughs in computer vision~\cite{dosovitskiy2020image} and speech processing~\cite{defossez2024moshi,zhang2023google,baevski2020wav2vec,chiu2022bestrq}.
However, MHA has a quadratic complexity with respect to the input sequence length, severely limiting the attainable context size. Numerous sub-quadratic approximations have been proposed, yet most trade efficiency for either performance or generality. For instance, Linformer~\cite{wang2020linformer} requires a fixed sequence length, while State Space Models (SSMs)~\cite{pei2024efficientvmambaatrousselectivescan} impose a causal ordering that does not apply to all modalities.

In this paper, we question whether quadratic complexity is truly necessary to match the performance of MHA on sequence-to-sequence tasks. 
Our approach draws inspiration from the long-standing use of high-order moments and polynomial features in representation learning~\cite{picard2011improving,jacob2019metric}, which we revisit here as an alternative to attention. 
We introduce the Polynomial Mixer (PoM), a novel module with linear complexity that achieves comparable performance to MHA across diverse, real-world tasks. PoM aggregates the entire input sequence into a compact representation using a learned polynomial function, computed in linear time. Each token then retrieves relevant contextual information from this summary through a cross-attention-like mechanism.

The \emph{contextual mapping property}~\cite{Yun20ICLR} is a key aspect of MHA enabling modern transformers to be universal sequence-to-sequence approximators. This property is rarely satisfied by standard neural network components; for instance, MLPs, CNNs, and SSMs do not exhibit it. We show that our proposed Polynomial Mixer (PoM) has the contextual mapping property, allowing it to replace attention blocks in transformers without sacrificing expressivity.

We validate the efficiency and expressivity of PoM across five diverse tasks from different domains: text generation, image generation, optical character recognition, 3D point cloud segmentation, and multi-temporal satellite images analysis. As shown in \cref{fig:teaser}, PoM matches the performance of MHA while providing a significant speedup on long sequences. Our main contributions are: 
\begin{itemize}
    \item We propose PoM, a token mixing mechanism with linear complexity in the number of tokens.
    \item We prove that PoM satisfies the contextual mapping property, ensuring transformers equipped with PoM remain universal sequence-to-sequence approximators.
    \item We demonstrate that PoM achieves performance on par with MHA across five distinct domains, while being substantially more efficient.
\end{itemize}

\section{Related Work}

We review prior work on efficient attention mechanisms and alternative architectures that aim to replace or bypass self-attention.

\paragraph{Faster Attention.}
Since the introduction of Transformers~\cite{vaswani17nips}, many efforts have been made to reduce the quadratic complexity of MHA~\cite{child2019generating,kitaev2020reformer,wang2020linformer}.
Notably, methods like Reformer~\cite{kitaev2020reformer} use fast approximate neighbors to reduce the size of the attention matrix based on the assumption that most tokens will have zero attention. 
To go further, Linformer~\cite{wang2020linformer} proposes to compute an explicit low rank projection of the keys and the values to reduce the complexity of MHA from the size of the sequence $n$ to an arbitrary chosen number $k \ll n$.
The main drawback of such approach is that $n$ and $k$ are fixed, which means that the model can no longer process sequences of varying length.
With the advent of Large Language Models and their ability to process extremely long sequences~\cite{achiam2023gpt,team2023gemini,dubey2024llama}, recent efforts have been put on more efficient implementations such as Flash-Attention~\cite{dao22nips,dao2023flashattention} or KV-cache~\cite{brandon2024reducing,luohe2024keep} which seem sufficient for text.
However for visual content, the sequence length grows quadratically with the resolution. Because MHA is also quadratic in the number of tokens, this leads to quartic computational and memory complexity.

\paragraph{Alternatives to Attention.}
Alternatively, some attempts have been made to remove the Multi-Head Attention entirely, such as in MLP-Mixer~\cite{tolstikhin21nips} and ResMLP~\cite{touvron2022resmlp} that replace MHA with simple projection on the transpose tensor (\textit{i.e.}, considering the sequence dimension as features).
These approaches have been shown to obtain competitive results, but similarly to Linformer, they imply a fix sequence length since this length is now an intrinsic dimension of the projection in the transpose direction.
More recently, State-Space Models (SSM)~\cite{gu21iclr,gu21nips} have become the focus of recent work especially in language modeling~\cite{zuo2024falcon,dao24icml,lieber2024jamba,glorioso2024zamba}.
SSM are recurrent models, which is highly beneficial for language modeling because of the causal property of text: the complexity to generate the next token becomes constant.

In visual content however, there is no such natural causality pattern in the spatial dimensions.
Attempt to use such models for vision tasks have been successful~\cite{zhu2024vision,liu2024vmambavisualstatespace,pei2024efficientvmambaatrousselectivescan}, albeit at the cost of enforcing an arbitrary 1-dimensional scan order of the tokens that does not encode well the 2D nature of an image.
In image generation with diffusion~\cite{hu24eccv,yan24cvpr}, since the model has to be iterated, this results in a doubly sequential processing (space and iterations) that does not benefit from the parallel nature of processing images.
For video however, the causal aspect is natural over the time dimension, recurrent approaches may be more efficient.
Overall, we question whether a linear time algorithm can achieve the performances of the quadratic attention without imposing causal constraints as SSMs or a fixed sequenced size as Linformer or ResMLP.

\section{Method}
In this section, we first describe PoM, our proposed sequence-to-sequence drop-in replacement for MHA (\cref{sec:pom}). We then propose an extension to causal sequences (\cref{sec:causal}) and a theoretical analysis (\cref{sec:analysis})
%=============================
\subsection{Polynomial Mixer and Polymorpher}
%=============================
\label{sec:pom}

\begin{figure*}[t]
    \centering
    \begin{tabular}{c@{\;}c}
    \hspace{-8.8pt} % Pull the figure to the left, as the white spacing was too large
    \begin{minipage}{0.332\linewidth}
    \captionof{figure}{\textbf{Polynomial Mixer.} 
An input sequence $X$ of $n$ tokens with dimension $d$ follows two parallel paths. 
In the top path, each token is first expanded from dimension $d$ to $D$, then transformed through a polynomial of degree $k$ with coefficients $\bm{\alpha}$ (all multiplications and exponentiations are element-wise). 
Dimension $k$ is collapsed by summation, yielding a shared state $H(X) \in \mathbb{R}^D$. 
In the bottom path, each token produces gating coefficients in $[0,1]^D$, which query information from $H(X)$ via element-wise multiplication (broadcasted across $n$). 
The result is finally projected back to the original dimension $d$.}
    \label{fig:pom}
     \end{minipage}
     &
    \begin{minipage}{0.65\linewidth}
    \resizebox{\linewidth}{!}{
    \def\xinput{0}%
\def\xh{3.5}%
\def\xpoly{6}%
\def\xquerry{6}%
\def\xpolyout{8}%
\def\xpom{10}%
\def\xdot{11}%
\def\xmix{12.25}%
\def\xout{15}%
\def\ymid{0}%
\def\yup{2}%
\def\ydown{-2}%
\def\ysep{1.2}%
\def\hdot{0.3}%
\providecommand{\tokenbars}{10}%
\definecolor{colorA}{RGB}{66,198,184}%    % light teal
\definecolor{colorB}{RGB}{254,170,94}%    % light orange
\definecolor{colorC}{RGB}{110,150,230}% % light blue
\definecolor{colorD}{RGB}{214,106,205}% % light magenta
\definecolor{colorE}{RGB}{205,205,205}%   % light gray% light gray
\begin{tikzpicture}
%\draw[use as bounding box, red] (-0.5,-3) rectangle (16,3);
%input tokens
\node [draw=none] (input) at (\xinput, \ymid) {\tokengrid{1}{4}{1}{colorA}};

%after h(Wh X)
\node [draw=none] (prepoly) at (\xh, \yup) {\tokengrid{1}{4}{2}{colorB}};

%poly
\node [very thick, draw=black, rounded corners] (poly) at (\xpoly, \yup) {%
$\begingroup
\begin{array}{@{}r@{\;}c@{\;}l@{\;}l@{}}
  \alpha_1 & \odot & x   & + \\
  \alpha_2 & \odot & x^2 & + \\
  \multicolumn{4}{c}{\cdots} \\[-1mm]
  \alpha_k & \odot & x^k &
\end{array}
\endgroup$%
};

%after polynom
\node [draw=none] (postpoly) at (\xpolyout, \yup) {\tokengrid{1}{4}{2}{colorB}};

%pom
\node [draw=none] (pom) at (\xpom, \yup) {\tokengrid{1}{1}{2}{colorB}};

%querries
\node [draw=none] (querries) at (\xquerry, \ydown) {\tokengrid{1}{4}{2}{colorC}};

%dot
\node [very thick, circle, minimum height=\hdot cm, draw=black] (dot) at (\xdot, \ymid) {
\begin{tikzpicture}
    \fill[black] (0,0) circle (0.5 mm);
\end{tikzpicture}
};

%after mix
\node [draw=none] (mix) at (\xmix, \ymid) {\tokengrid{1}{4}{2}{colorD}};

%out
\node [draw=none] (out) at (\xout, \ymid) {\tokengrid{1}{4}{1}{colorE}};

%edges
\draw [very thick, rounded corners, ->] (input) -- ++ (1,0) |- node [pos=1.0, above, draw=none, anchor=south east] {$h(W_h \times x)$}  (prepoly);

\draw [very thick, rounded corners, ->] (prepoly) -- (poly) --  (postpoly) -- node [pos=0.5, above, draw=none] {$\updownarrow\! \sum$} (pom) -| (dot);

\draw [very thick, rounded corners, ->] (input) -- ++ (1,0) |- node [pos=0.75, above, draw=none] {$\sigma(W_s \times x)$} (querries);

\draw [very thick, rounded corners, ->] (querries) -| (dot);

\draw [very thick, rounded corners, ->] (dot) -- (mix) -- node [pos=0.5, above, draw=none] {$h(W_o \times x)$} (out);

%shapes

\node [draw=none, below=-1.5mm of input] {$d \times n$};
\node [draw=none, above=-1.5mm of input] {$X$};

\node [draw=none, below=-1.5mm of prepoly] {$D \times n$};

\node [draw=none, below=-1.5mm of postpoly] {$D \times n$};

\node [draw=none, below=-1.5mm of pom] {$D$};
\node [draw=none, above=-1.5mm of pom] {$H(X)$};

\node [draw=none, above=-1.5mm of querries] {$D \times n$};

\node [draw=none, below=-1.5mm of mix] {$D \times n$};

\node [draw=none, below=-1.5mm of out] {$d \times n$};
\node [draw=none, above=-1.5mm of out] {$\PoM(X)$};

%\node [draw=none, below=-1.0mm of poly] {\shortstack{applied\\token-wise}};
\end{tikzpicture}
    }
    \end{minipage}

    \end{tabular}
\end{figure*}
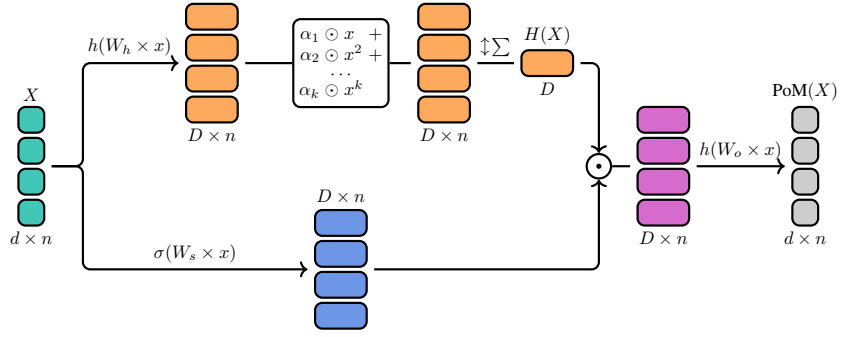

\paragraph{Polynomial Mixer.}
We consider an input sequence of $n$ tokens of dimension $d$, stored in a tensor $X \in \mathbb{R}^{d\times n}$.
Unlike MHA, which computes all pairwise interactions between tokens, the Polynomial Mixer relies on a shared \emph{state representation} $H(X)$ that serves as a common memory accessible to all tokens. This state aggregates information from the entire sequence after mapping tokens into a higher-dimensional space $\mathbb{R}^D$ through a learned polynomial function—hence the name \emph{Polynomial Mixer}.

Formally, the sequence $X \in \mathbb{R}^{d\times n}$ is summarized into a single representation $H(X) \in \mathbb{R}^{D\times 1}$, defined as a polynomial of degree $k$:
\begin{align}
    H(X) &= \left[\sum_{p=1}^k \bm{\alpha}_p \odot h(W_h X)^p\right]\bm{1}~, 
\end{align}
where $D$ is the internal feature dimension, $\bm{\alpha}\in\mathbb{R}^{D\times k}$ are the polynomial coefficients, $h$ is an activation function, $\odot$ denotes the element-wise (Hadamard) product with broadcasting, and $\bm{1}$ is a vector of ones used to sum over the sequence dimension $n$. The exponentiation is also applied element-wise. The matrix $W_h \in \mathbb{R}^{D\times d}$ is a learnable parameter that mixes dimensions to ensure $H(X)$ is composed of inter-dimension polynomials instead of just monomials.

Each token then queries the shared representation $H(X)$ using
the gating coefficient $S(X) = \sigma(W_s X) \in [0,1]^{D\times n}$ and the resulting information is projected back into the original space by $W_o$:
\begin{align} \label{eq:pom}
    \PoM(X) &= W_o \left[\sigma(W_s X) \odot H(X)\right]~,
\end{align}
where $\sigma$ is the sigmoid function, and $W_o \in \mathbb{R}^{d\times D}$ and $W_s \in \mathbb{R}^{D\times d}$ are learnable parameters.  
All operations are linear in the sequence length $n$.

\if 1 0
We define a \emph{Polymorpher} block as a sequence-to-sequence function mapping $\mathbb{R}^{d\times n}$ to $\mathbb{R}^{d\times n}$, composed of two residual blocks, a \emph{Polynomial Mixer} and a feed-forward block. 

For a sequence $X\in \mathbb{R}^{d\times n}$, the Polynomial Mixer ($\PoM$) shown on Figure \ref{fig:pom} is defined as follows:
\begin{align}
    \PoM(X) &= W_o \left[\sigma(W_s X) \odot H(X)\right],\text{ with}\\
    H(X) &= \left[\sum_{p=1}^k \bm{\alpha}_p \odot h(W_h X)^p\right]\bm{1}, 
\end{align}
where $k$ is the degree and $\bm{\alpha}\in\mathbb{R}^{D\times k}$ are the coefficients of the polynomial, $\sigma$ is the sigmoid function, $h$ an activation function, $\odot$ the element-wise (Hadamard) product with possible broadcasting, and $\bm{1}$ a vector of the appropriate dimension filled with ones. The matrices $W_o\in \mathbb{R}^{d\times D}$, $W_s\in \mathbb{R}^{D\times d}$ and $W_h \in \mathbb{R}^{D\times d}$ are the learnable parameters of the Polynomial Mixer.
\fi

\paragraph{PolyMorpher.}
Drawing inspiration from Transformer architectures, we define a \emph{PolyMorpher} block $P$ that alternates residual Polynomial Mixer and feed-forward layers:
\begin{align}
    \text{P}(X) = X + \PoM(X) + \text{FF}(X + \PoM(X)),
\end{align}
where $\text{FF}(X)$ denotes a standard two-layer feed-forward network.  
The PolyMorpher thus acts as a \emph{drop-in replacement} for Transformer blocks based on MHA, fulfilling the same sequence-to-sequence mapping role. The key difference lies in its parametrization: while Transformers are configured by the number of heads and their dimension, the Polymorpher is defined by the  degree $k$ and the internal dimension $D$ of its polynomial.

%============================
\subsection{Polymorpher for Causal Sequences}
\label{sec:causal}
%============================
We consider a sequence with a causal structure encoded by a binary mask $M \in \{0,1\}^{n \times n}$, where $M_{s,t} = 1$ if and only if token $s$ is allowed to use information from token $t$ to update its representation.  
The Polynomial Mixer can be readily adapted to handle such causal dependencies by defining a distinct state representation $H(X)$ for each time step of the sequence:
\begin{align}
    H(X) = \left[\sum_{p=1}^k \bm{\alpha}_p \odot h(W_h X)^p\right]M^\top~. 
\end{align}
Here, $H(X) \in \mathbb{R}^{D \times n}$, and the formulation in \Cref{eq:pom} remains unchanged.

\paragraph{Causal Sequence.}
In the special case of causal sequences, the mask $M$ is a lower triangular matrix. 
This allows computing the state representation at time step $t$, denoted $H(X)_t$, through the following iterative process:
\begin{align}
    H(X)_{t} &= \sum_{s\leq t} \left[\sum_{p=1}^k \bm{\alpha}_p \odot h(W_h X)^p\right]_{s},\\
    &= H(X)_{t-1} + \left[\sum_{p=1}^k \bm{\alpha}_p \odot h(W_h X)^p\right]_{t}~.
\end{align}
This formulation enables $\mathcal{O}(1)$ inference complexity in the autoregressive setting, a property shared with recurrent networks but not with transformers. 
Similar to SSMs, Polymorphers combine the advantages of both worlds: they can be trained on the entire sequence in parallel while supporting efficient recursive inference.

\paragraph{Block Causal Sequence.}
Polymorphers can also model \emph{block-causal} sequences, as found in spatio-temporal data such as videos or satellite image time series. 
In this setting, each token can attend to all tokens within the same frame, as well as to tokens from preceding frames.
Let $M$ denote a block-causal mask for a block size $K$:
\begin{align}
    M_{i,j} = 1 \text{ if } j \leq \lceil i / K \rceil K \text{ else } 0.
\end{align}
The state representation $H$ can then be computed as
\begin{align}
\nonumber H(X)_{t} = &H(X)_{\lfloor t/K\rfloor K} \\
    &+ \sum_{s = \lfloor t/K\rfloor K}^{\lceil t / K \rceil K}\left[\sum_{p=1}^k \bm{\alpha}_p \odot h(W_h X)^p\right]_{s}.
\end{align}
This formulation allows processing tokens in blocks during inference, reducing memory requirements while preserving causal dependencies.

%=======================%
\subsection{Theoretical Analysis}
\label{sec:analysis}
%=======================%

We first show that $\PoM$ is \emph{equivariant}, which means that permutations in the input sequence result in permuted outputs. This is a key property that made transformers popular and does not hold for other architectures like convolutions or recurrent networks.

\begin{proposition}[Permutation equivariance]
    A Polynomial Mixer is permutation equivariant, \ie, let $X \in \mathbb{R}^{d\times n}$ be a set of vectors and $P$ a column permutation matrix, then $\PoM(XP) = \PoM(X)P$.
\end{proposition}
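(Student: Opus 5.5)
The proof splits $\PoM$ into its two paths: the state $H(X)$ and the token-wise gating and projection. We then track how a column permutation $P$ passes through each. Two elementary facts drive everything. First, right-multiplication by $P$ permutes columns, so any map applied column by column commutes with it. Second, $P$ fixes the all-ones vector, $P\bm{1}=\bm{1}$, because a permutation matrix has exactly one $1$ in each row.

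\textbf{Step 1: the state is invariant.} Write $Z(X)=\sum_{p=1}^k \bm{\alpha}_p \odot h(W_h X)^p \in\mathbb{R}^{D\times n}$. Since $W_h(XP)=(W_hX)P$, and $h$, element-wise powers and Hadamard products with the column-broadcast $\bm{\alpha}_p$ all act entrywise (identically on every column), we get $Z(XP)=Z(X)P$. Hence
\begin{align}
H(XP)=Z(X)P\bm{1}=Z(X)\bm{1}=H(X).
\end{align}

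\textbf{Step 2: the gate is equivariant.} In the same way, $\sigma(W_s XP)=\sigma(W_sX)P$, since $\sigma$ acts entrywise.

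\textbf{Step 3: combine.} The product $\sigma(W_sX)\odot H(X)$ broadcasts the column vector $H(X)$ across all $n$ columns. It therefore scales every column by the same vector, so $(AP)\odot H = (A\odot H)P$ for any $A$. This gives $\sigma(W_sXP)\odot H(XP)=\big[\sigma(W_sX)\odot H(X)\big]P$. Left-multiplying by $W_o$ commutes with right-multiplication by $P$, which yields $\PoM(XP)=\PoM(X)P$.

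None of these steps is a real obstacle. The only point needing care is Step 3, where broadcasting must be written explicitly, for instance as $H(X)\bm{1}^\top$ with $\bm{1}^\top P=\bm{1}^\top$, to see that it commutes with $P$. It is also worth noting that the argument covers the non-causal $\PoM$ only. For the masked version it would additionally need $P^\top M P = M$, which fails for causal masks.
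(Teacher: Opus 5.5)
Your proof is correct and follows the same route as the paper: $H(XP)=H(X)$ because the sum over tokens is permutation invariant, $\sigma(W_sXP)=\sigma(W_sX)P$ because $\sigma$ is element-wise, and the broadcast $H(X)\bm{1}^\top$ has identical columns, so $P$ can be pulled outside the Hadamard product and past $W_o$. You justify the invariance of $H$ more explicitly than the paper, via $Z(XP)=Z(X)P$ and $P\bm{1}=\bm{1}$, and your remark that the argument does not carry over to causal masks is accurate.
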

\begin{proof}
    For a permutation $P$, we have 
    \begin{align}\PoM(XP) = W_o \left[ \sigma(W_s X P) \circ H(X P)\bm{1}^\top \right]. 
    \end{align}
    We have that $H(X P) = H(X)$ because the sum is permutation invariant, and $\sigma(W_s X P) = \sigma(W_s X) P$ because $\sigma$ is an element-wise operation. 
    Noticing that $H(X)\bm{1}^\top$ has all identical columns allows us to move $P$ outside of the brackets to conclude the proof.
\end{proof}

More importantly, we can also prove a universal approximation theorem for Polymorphers similar to what is well known for Transformers~\cite{Yun20ICLR}. As the polynomial mixer is equivariant, it requires the use of positional encoding (learned for simplicity reasons), which also underlines the similarity between $\PoM$ and MHA. 

We use the following standard definition of distance between functions that map sequences to sequences. Given two functions $f$ and $g: \mathbb{R}^{d_n}\rightarrow \mathbb{R}^{d_n}$ and an integer $A\leq p\leq \infty$, we define the distance $d_p$ as:
\begin{align}
    d_p(f, g) = \left(\int \| f(X) - g(X)\|_p^pdX\right)^{1/p}.
\end{align}

The following theorem holds:
\begin{theorem}[Universal approximation]
    Let $1 \leq p \leq \infty$ and $\epsilon > 0$, then for any given $f\in \mathcal{F}$ the set of continuous functions that map a compact domain in $\mathbb{R}^{d\times n}$ to $\mathbb{R}^{d\times n}$, there exists a Polymorpher $g$ with learned positional encoding such that $d_p(f, g) \leq \epsilon$.
\end{theorem}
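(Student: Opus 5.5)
We follow the three-step strategy of Yun et al.~\cite{Yun20ICLR} for Transformers, replacing self-attention by $\PoM$ only where a contextual mapping is needed.

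\textbf{Step 1 (reduction to piecewise constant maps).} Since the domain is compact, $f$ is uniformly continuous. We approximate it in $d_p$ by a function $\bar f$ that is constant on each cube of a grid $\mathbb{G}_\delta$ of side $\delta$. This step is purely analytic and is identical to the Transformer case.

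\textbf{Step 2 (quantization and contextual mapping).} We build the approximation of $\bar f$ in three stages.
\begin{enumerate}
\item The learned positional encoding $E$ is added, so that $X+E$ has columns lying in disjoint ranges. This breaks the permutation equivariance of the Proposition.
\item Feed-forward layers of PolyMorpher blocks, with $\PoM$ switched off by setting $W_o=0$, quantize each column onto the grid. Off a set of arbitrarily small measure this is exact; it is the same token-wise construction as in \cite{Yun20ICLR}.
\item A stack of PolyMorpher blocks realizes a \emph{contextual mapping}. It must send each quantized sequence $L$ to a sequence $q(L)$ whose entries are all distinct, both within $L$ and across different $L$.
\end{enumerate}

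For the contextual mapping we pick $d' = 1$ coordinate and take $W_h$ to read it, with $h$ the identity on the bounded range. Take $\bm{\alpha}$ to encode an injective symmetric statistic of the multiset of entries. The shared state $H(X)=\sum_j \sum_p \alpha_p u_j^p$ then contains the power sums $\sum_j u_j^p$ for $p\le k$. By Newton's identities, power sums up to degree $n$ determine the multiset, and positions are already encoded, so the multiset determines $L$. With finitely many grid sequences, a single generic linear combination $c(L)=\sum_p \alpha_p \sum_j u_j^p$ is already injective on the finite set of quantized inputs. This holds for generic $\bm{\alpha}$, because the finitely many coincidences are proper algebraic conditions. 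Each token then receives $\sigma(W_s x_j)\, c(L)$ through the gate and adds it to itself via $W_o$ and the residual. To make the result injective in the pair $(j, L)$, we make the gate token-dependent and nearly constant: choose $W_s$ so that $\sigma(W_s x_j)=\sigma(\beta u_j)$ with small $\beta$. The output $u_j + \gamma\,\sigma(\beta u_j)\,c(L)$ then has distinct values across all $(j,L)$ for a suitable choice of scale $\gamma$. This again follows by a finite genericity argument, since a finite set of nontrivial analytic equations in $(\beta,\gamma)$ is avoided by almost every choice.

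\textbf{Step 3 (value mapping).} Token-wise feed-forward layers map each distinct value $q(L)_j$ to $\bar f(L)_j$. This is the standard memorization of a finite set by ReLU networks in \cite{Yun20ICLR}. We then pass from the modified architecture, with hard quantizers, to genuine PolyMorphers by the same continuity and small-measure-exception argument used there, which costs at most $\epsilon/2$ in $d_p$.

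\textbf{Main obstacle.} The hardest step is the contextual mapping in Step 2, specifically the genericity claim. We must show that for some admissible $(\bm{\alpha}, W_s, W_o)$ the map $(j,L)\mapsto u_j+\gamma\sigma(\beta u_j)c(L)$ is injective on the finite grid. We would argue that each possible coincidence defines a nonzero real-analytic function of the parameters, so the bad set has measure zero. If $k<n$ is imposed, we would instead stack several PoM layers, or rely on distinct positional offsets making low-degree power sums already injective on the finite set.
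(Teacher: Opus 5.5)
Your argument is sound, and it has the same skeleton as the paper's. Both take Steps~1 and~3 from Yun et al.~\cite{Yun20ICLR} and ask $\PoM$ only to supply the contextual mapping, through power sums of the token values. The difference lies in how that lemma is proved.

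The paper's appendix works with the bare mixer output, gate times state ($x\,S_k$, no residual). It chooses a degree $k$ with $S_k\neq S'_k$ separately for each pair of sequences. It then argues that $xS_k=x'S'_k$ cannot hold for \emph{all} pairs $(x,x')$. That only rules out every pair colliding, not a single collision, and it never yields one $k$ valid for all pairs at once.

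You instead fix the degree uniformly, $k\ge n$, via Newton's identities. You compress the power-sum vector to one scalar $c(L)$ by choosing $\bm{\alpha}$ off finitely many hyperplanes. You then use the block's residual with a generic scale $\gamma$ to separate all pairs $(j,L)$. Each coincidence is a nonzero analytic equation in $(\beta,\gamma)$, because $c$ is injective in $L$ and the $u_j$ are distinct within a sequence. This finite genericity argument actually delivers both conditions of a contextual mapping. In fact $\beta=0$, a constant gate, already suffices, since the residual carries the token identity. It also makes explicit that the property holds for generic parameters, not for ``any'' mixer of degree $k$ as the informal lemma says. The price is that your statement concerns the PolyMorpher block rather than the bare $\PoM$ module. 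That is harmless, since the theorem is about PolyMorphers.

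Two small repairs and one caveat.
\begin{itemize}
\item The scalar $u_j$ read by the relevant rows of $W_h$ and $W_s$ must injectively encode the whole position-shifted column, e.g.\ $u^\top x_j$ with $u=(1,\delta^{-1},\dots,\delta^{-(d-1)})$ as in Yun et al. A single raw coordinate lets two sequences that differ only in other coordinates give the same multiset.
\item Your fallback for $k<n$ is not automatic. Additive positional offsets do not prevent collisions of low-degree power sums; already $\sum_j u_j$ can collide for $n=2$. A token-wise feed-forward re-encoding into well-separated scales would fix this.
\item The small-measure argument in your Step~3, like Yun et al.'s on which the paper relies, only controls $d_p$ for $1\le p<\infty$. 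For $p=\infty$, a discrepancy on a set of small positive measure is not small. Neither your argument nor the paper's covers that endpoint of the statement.
\end{itemize}
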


The proof follows exactly the same scheme as in ~\cite{Yun20ICLR}, where most of the heavy lifting is done by the feed-forward networks. Their main argument is to show that MHA can map every token in the sequence to a unique value that depends on the entire sequence, and then the feed-forward blocks can map those unique values to the desired output. In our case, we just have to ensure that the Polynomial Mixer has the same properties as MHA, which is obtained using the following lemma:

\begin{lemma}[Contextual mapping (informal)]
    There exists $k > 0$ for which any Polynomial Mixer $q$ of degree $k$ is a contextual mappings on $\mathbb{R}^{d\times n}$, that is:
    \begin{itemize}
        \item For any $X \in \mathbb{R}^{d\times n}$ with different entries, $q(X)$ has different entries.
        \item For any $X, X' \in \mathbb{R}^{d\times n}$ that differ at least by one element, then all entries of $q(L)$ and $q(L')$ are different.
    \end{itemize}
\end{lemma}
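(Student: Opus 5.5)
Following Yun et al.~\cite{Yun20ICLR}, I would prove the lemma on a quantized grid: inputs $L \in \mathbb{G}_\delta^{d\times n}$ with $\mathbb{G}_\delta=\{0,\delta,\dots,1-\delta\}$, with positional encodings already added so that the columns of $L$ are distinct. The informal statement cannot hold for \emph{every} parameter choice; for instance, $W_o=0$ is a degree-$k$ Polynomial Mixer. I would therefore prove that suitable parameters exist, and also allow the residual connection $L+\PoM(L)$, as in the Polymorpher block. The first bullet then comes almost for free. If $\PoM(L)=W_o[\sigma(W_sL)\odot H(L)]$ has all columns equal to $W_o H(L)$ up to a small column-dependent perturbation, the residual keeps distinct columns distinct. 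The substantive work is the second bullet: two sequences $L\neq L'$ must produce outputs with no shared column.

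\textbf{Step 1 (injective summary via moments).} Set $D\ge d$ and pick $W_h$ so that it contains a projection $u^\top$ with $u=(1,\delta^{-1},\dots,\delta^{-d+1})$. Each column $l_j$ is then mapped to a distinct scalar $z_j=u^\top l_j$, and distinct grid vectors give distinct scalars. Take $h$ to be the identity, or a monotone activation that is injective on the bounded range. The pooled quantity $H(L)$ then contains the power sums $\sum_j \alpha_p z_j^p$ for $p=1,\dots,k$. By Newton's identities, the power sums up to degree $n$ determine the multiset $\{z_j\}$, so taking $k\ge n$ makes the vector of power sums $m(L)=(\sum_j z_j^p)_{p\le k}$ injective on multisets. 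This step is where the existence of $k$ comes from. Because $H$ collapses the $k$ dimension into a single output coordinate, I would spread the powers over different coordinates: use $k$ copies of the row $u^\top$ in $W_h$, and choose $\bm{\alpha}$ one-hot so that coordinate $p$ carries only $z^p$.

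\textbf{Step 2 (a scalar context id and per-token read-out).} Compress $m(L)$ into one scalar $c(L)=\beta^\top m(L)$ with generic $\beta$. Since the grid is finite, a generic $\beta$ separates all the finitely many distinct moment vectors, and it can also be chosen so that the gaps $|c(L)-c(L')|$ are larger than any per-token variation we add. The output for token $j$ should combine the global $c(L)$ with a token-dependent term. Make one gate coordinate $\sigma(w^\top l_j)$ depend on the token, and let one coordinate of $H$ carry a huge constant $C\,c(L)$ while another carries $c(L)$ modulated by $\sigma(w_s^\top l_j)$. Projecting with $W_o$ and adding the residual gives a column $\approx l_j + e_1\,(C c(L) + \text{token term})$. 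Here I use the assumption that the sequence has distinct columns, i.e.\ positional encodings, as in Yun et al., where the sequences considered are those whose multisets differ and whose entries are distinct.

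\textbf{Step 3 (separation), the main obstacle.} I must show $C c(L)+t_j \neq C c(L')+t'_{j'}$ for all $j,j'$ whenever $L\neq L'$. This requires that the multiset of $L$ differs from that of $L'$. With learned positional encodings, distinct sequences give distinct multisets of (token, position) pairs, so Step 1 yields $c(L)\neq c(L')$. The separation then reduces to a margin argument: the token terms $t_j$ are bounded by a constant $B$, while $C\min_{L\neq L'}|c(L)-c(L')|>2B$ for $C$ large enough. That minimum is positive because the grid is finite. The delicate part is checking that the sigmoid gating and the finite-precision bounds give uniformly bounded $t_j$. I also need the large constant $C$ not to destroy the first bullet; this works because distinctness within a sequence comes from the residual term $l_j$ plus $t_j$, and the shared term $C c(L)$ is identical across columns. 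I would choose $C$ last, after fixing $\beta$ and $w_s$.
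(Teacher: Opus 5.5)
Your proof is correct for the statement in the only form in which it can hold, and it follows a different route from the paper at every step.

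\textbf{The paper's route.} The paper specializes to identity matrices and linear activations and uses no residual. It obtains the first bullet from injectivity of $W_s$, $\sigma$ and $W_o$. For the second bullet it picks, for each pair $(X,X')$, a single exponent $k$ with $S_k=\sum_i x_i^k\neq S'_k$. It then argues $xS_k\neq x'S'_k$ for all $x\in X$, $x'\in X'$ through the ratio $x/x'$.

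\textbf{Your route.} You work on a finite grid with positional encodings, prove existence of parameters, and use the residual. You take the whole moment vector up to degree $k\ge n$, so Newton's identities separate all multisets at once. You compress it with a generic $\beta$, which is possible because the grid is finite; $\beta$ can be placed directly in $\bm{\alpha}$, so spreading the powers over coordinates is optional. The context then enters additively with a dominant constant $C$.

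\textbf{What your route buys.} It supplies exactly what the paper's argument lacks. There, $k$ depends on the pair, whereas the lemma needs one mixer for all pairs. The paper's last step is also a non sequitur: equality is assumed for one pair $(x,x')$ but refuted only as an identity over all pairs. Multiplicative collisions really occur. For $k=1$, $X=\{1,3\}$ and $X'=\{2,4\}$ satisfy $S_1=4\neq 6=S'_1$, yet $3\cdot 4=2\cdot 6$. Your margin argument sidesteps this entirely.

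\textbf{Your reformulation is forced.} The literal statement fails for $W_o=0$. Moreover, on all of $\mathbb{R}^{d\times n}$ the second bullet would make $X\mapsto q(X)_{:,1}$ a continuous injection from a small ball of $\mathbb{R}^{dn}$ (around an input with distinct columns) into $\mathbb{R}^d$. That is impossible for $n\ge 2$ by invariance of domain, so a quantized domain as in Yun et al.\ is needed. The price is a degree growing with $n$, huge constants, and reliance on the residual connection.

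\textbf{Points to make explicit in the write-up.} First, the gate on the coordinate carrying $Cc(L)$ must be token-independent: zero that row of $W_s$ and absorb $\sigma(0)=1/2$ into $W_o$. Otherwise the ``shared'' term varies across tokens by up to $C|c(L)|$ and the margin fails. Second, $t_j$ must be scaled below the grid spacing, so it cannot cancel a difference $l_j-l_{j'}$ confined to the first coordinate. Third, $B$ must bound $|l_{j,1}+t_j|$ rather than $|t_j|$, since the residual also lands in that coordinate.
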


The proof is deferred to the appendix and primarily uses the fact that a sufficiently high degree polynomial is uniquely defined by a sequence of point-wise evaluation. As noted in \cite{Yun20ICLR}, having the contextual mapping property is not so common as it requires to summarize uniquely the context while preserving the identity of the current token.

With these results, we show that a Polymorpher is as potent as a Transformer for sequence modeling.
\begin{figure*}[t]
    \centering
    \input{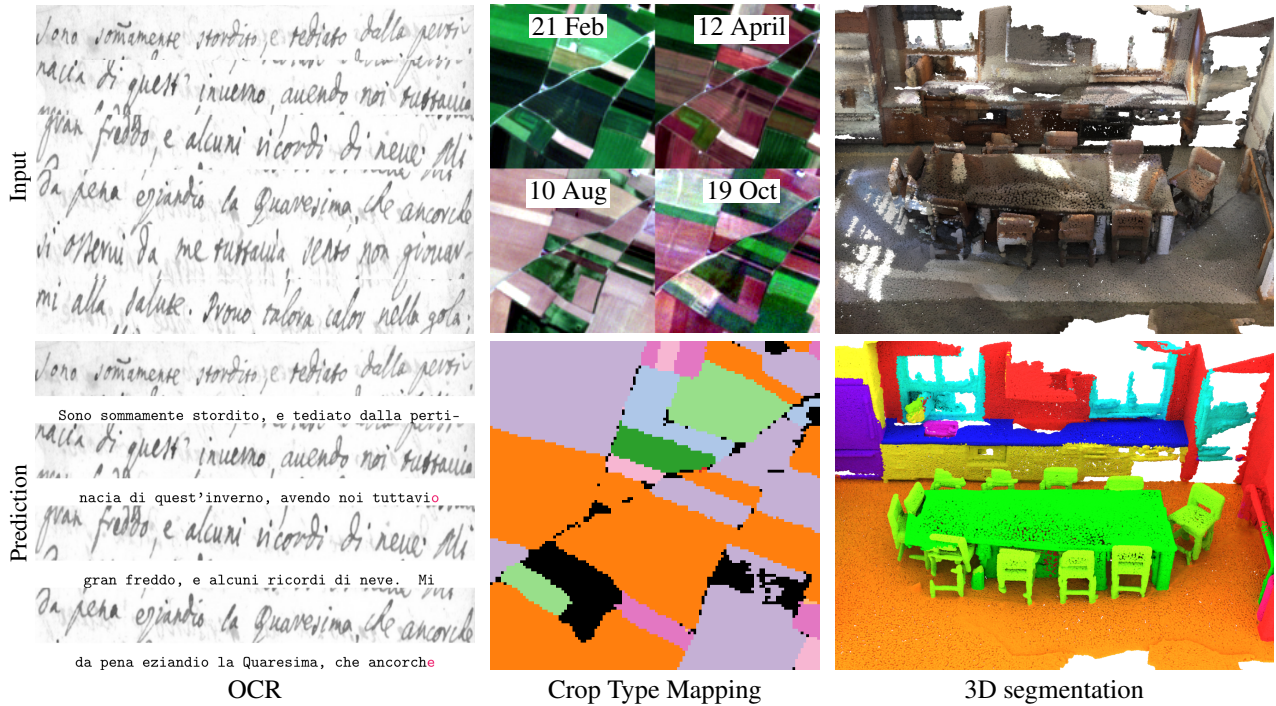}
    % \vspace{-2mm}
    \caption{{\bf Evaluation Across Domains.} We evaluate PoM on various tasks from multiple domains, simply replacing some or all the self-attention blocks in SOTA Transformer-based models. \emph{Left}: For OCR, given a page of handwritten text (top) split into lines, the goal is to recognize each character and group them into words (bottom). \emph{Middle}: For Earth observation, given a time series of satellite images (top), the goal is to classify each pixel as a crop type by the end of the series (bottom). \emph{Right}: For 3D segmentation, given a 3D point cloud (top), the goal is to classify each point according to classes (bottom).}
    \label{fig:all_tasks}
\end{figure*}

%===============================
\subsection{Complexity and Efficiency Analysis}
\label{sec:complex}
%===============================
\paragraph{When is PoM faster than MHA?}
For a sequence of length $n$, the cost of PoM breaks down as follows: $2dDn$ multiplications for  input projections, $nkD$ multiplications to compute the polynomial, and and $nD$ sums to aggregate it. The selection operation costs $nD$ multiplications, and the output projection costs  $nDd$. Assuming that the cost of additions is negligible in front of that of multiplications, this gives us a total cost of $3ndD + (k+1)nD$ multiplications.

In comparison, the cost of input projections for attention is $3nd^2$ multiplications. Then, the cost of computing the attention matrix is $dn^2$ multiplications and aggregating the values is also $dn^2$ multiplications. Finally, the output projection cost is $nd^2$. This sums to a total cost of $4nd^2 + 2dn^2$.

Solving for $n$ leads to PoM being faster when
\begin{align}
    n \geq \frac{D(3d+k+1) - 4d^2}{2d}~,
\end{align}
with the polynomial degree $k$ and the hidden dimension $D$ setting different trade-offs.
In our experiments, we found that $D=2d$ and $k=2$ is sufficient to match the performances of attention. In this case, PoM has a complexity advantage for $n\geq d+3$ which usually amounts to sequences of sizes greater than 1k tokens ($d=512$--$1024$).

Below this range, highly optimized attention kernels can remain competitive; however, as $n$ grows, the cost of MHA increases quadratically while PoM grows linearly, making PoM significantly more efficient for long-context applications.  
This trend is especially pronounced in settings such as high-resolution image or video modeling, geospatial sequences, or autoregressive generation over long documents, where PoM’s linear scaling yields substantial speedups.

\paragraph{Comparing with Flash-Attention.}
Highly optimized attention implementations such as FlashAttention~\cite{dao2022flashattention} achieve near-linear runtime for moderate sequence lengths through clever memory management and custom CUDA kernels. 
In contrast, our PoM implementation is written entirely in high-level PyTorch, requiring no compilation or specialized code. 
Despite this, PoM already demonstrates superior scalability for long sequences and large images, but remain slower for smaller ones. 
We argue that directly comparing new methods with highly-optimized implementations should be interpreted cautiously.  
This paper presents a proof of concept, showcasing PoM’s efficiency and versatility across several domains and multiple tasks. We expect further gains with dedicated low-level optimization which are out of the scope of our proof of concept.

\section{Experiments}
We evaluate PoM across a diverse set of tasks and modalities: natural language processing (\cref{sec:nlp}), optical character recognition (\cref{sec:ocr}), Earth observation analysis (\cref{sec:eo}), 3D point cloud segmentation (\cref{sec:3d}), and image generation (\cref{sec:imgen}). Examples for the visual tasks are shown in Figure~\ref{fig:all_tasks}.
This diversity highlights that PoM can serve as a drop-in, compute-efficient replacement for the standard self-attention transformer block. 

We also explore the validity of hybrid models which retain some attention layers. In all our experiments, we do not introduce major architectural changes compared to the original models, but we parametrize PoM and the MLP in the blocks so as to keep the total number of parameters close to the original one. We also do not deviate from their original training recipe, except for experiments where changing the number of tokens would create training instabilities both for attention and PoM.

\begin{table}[t]
    \centering
    \begin{tabular}{c}
    \begin{subtable}{\linewidth}
    \resizebox{\linewidth}{!}{\centering\footnotesize
    \begin{tabular}{l@{\,}!{\color{gray!20}\vrule}c@{\,}c@{\,}c@{\,}c@{\,}c@{\,}}
    \toprule
    \multirow{2}{*}{\textbf{ Model }} & \multirow{2}{*}{\makecell[c]{\textbf{Val} \\ \textbf{Loss}↓}} & \multirow{2}{*}{\makecell[c]{\textbf{ARC-E} \\ \textbf{Acc.Norm}↑}} & \multirow{2}{*}{\makecell[c]{\textbf{HellaSwag} \\ \textbf{Acc.Norm}↑}} & \multirow{2}{*}{\makecell[c]{\textbf{Winogrande} \\ \textbf{Acc.}↑}} & \multirow{2}{*}{\makecell[c]{\textbf{MMLU} \\ \textbf{Acc.}↑}}\\
    &&&&&\\
    \midrule
    {\color{gray!90}GPT2-S (OpenAI)}    & {\color{gray!90}-} & {\color{gray!90}42.8} & {\color{gray!90}31.6}  & {\color{gray!90}50.0}  & {\color{gray!90}26.1}  \\
    GPT2-S 12MHA     & \textbf{3.29} & \textbf{29.4}  & {33.3}  & 49.4  & 24.6  \\
    \pomicon GPPoM2-S & 3.88 & 28.7 & 27.4 & 48.6 & 25.5  \\    
    \pomicon GPPoM2-S Hyb.  & {3.31} & 29.0  & \textbf{33.8} & \textbf{51.9} & \textbf{25.6} \\
    \bottomrule
    \end{tabular}}
    \caption{{\bf Performance.} Comparison of GPT2 trained on 15B tokens from FineWeb and its GPPoM2 version.}
    \label{tab:nanogpt2:perf}
    \end{subtable}
\\
\centering
\begin{subtable}{\linewidth}
\centering\footnotesize
\begin{tabular}{lrrrr}
\toprule
\textbf{Sequence length} & 1k & 4k & 16k & 32k \\
\midrule
GPT2-S MHA & 572.3 & 294.6 & 99.3 & 52.4 \\
\greyrule
\pomicon GPPoM2-S $k=2$ & 995.7 & 893.1 & 624.2 & 447.5 \\
\phantom{\pomicon GPPoM2-S} $k=3$ & 993.5 & 890.5 & 622.79 & 445.78 \\
\phantom{\pomicon GPPoM2-S} $k=5$ & 995.6 & 890.2 & 617.10 & 439.51 \\
\pomicon GPPoM2-S Hyb. & 796.1 & 533.4 & 227.8 & 128.7 \\
\greyrule
Mamba & 56.4 & 56.1 & 56.8 & 56.7 \\
\bottomrule
\end{tabular}
%}
\caption{\textbf{Efficiency.} Token processing speed (in k-tokens/s) at which models are processing (forward) sequences of varying length.}
\label{tab:speed-nlp}
\end{subtable}
\end{tabular}
%}
\vspace{-2mm}
    \caption{{\bf Natural Language Processing.}}
    \label{tab:nanogpt2}
\end{table}

%============================================
\subsection{Natural Language Processing}
\label{sec:nlp}
%============================================
%

\paragraph{Setting.}
We train 125M GPT-2 models on 15B tokens from FineWeb~\cite{penedo2024fineweb}, following the nanoGPT setup~\cite{Karpathy2022}. 
We evaluate models using validation loss, accuracy on the following benchmarks: ARC-E \cite{clark2018think}, HellaSwag~\cite{zellers2019hellaswag}, Winogrande \cite{sakaguchi2021winogrande}, and MMLU \cite{hendrycksmeasuring}. We measure generation speed measured as the time to produce the last token in sequences of varying lengths.  

We consider two configurations:  
(i) \textbf{GPPoM2}, where every self-attention block is replaced by a Polymorpher, and  
(ii) \textbf{GPPoM2 Hyb.}, where pairs of self-attention blocks are replaced by one Polymorpher and one local-attention block with a causal context limited to the last 128 tokens.  
This hybrid design combines the global receptive field of PoM’s state representation with the local, fine-grained modeling of attention; this is conceptually similar to how LSTMs blend long- and short-range dependencies \cite{hochreiter1997long}.  
Moreover, the hybrid architecture preserves a constant computational cost during autoregressive inference, since the local attention context is fixed.

\paragraph{Results.}
As reported in \cref{tab:nanogpt2:perf}, GPPoM2 achieves performance comparable, though slightly below, that of full-attention Transformers. 
It is worth noting that all training hyperparameters were tuned specifically for standard Transformers and we did not change the training recipe.  
In contrast, the Hybrid GPPoM matches, and occasionally surpasses, the performance of GPT2-S, including the checkpoint released by OpenAI that was trained on an order of magnitude more data.  
Both GPPoM variants are substantially mode efficient than Transformers, owing to their linear-time complexity. We show in Table~\ref{tab:speed-nlp} the average speed for a forward pass on a fix token budget (from $128$ batches of $1024$ tokens to $4$ batches of $32$k tokens), using custom triton kernels.
PoM-based models are even substancially faster than Mamba, which only beats attention at very long sequence lengths due to the optimization in Flash-Attention. The degree $k$ does not have impact on the speed.

%============================================
\subsection{OCR}
\label{sec:ocr}
%============================================
\begin{table}[t]\
    \centering
    \resizebox{\columnwidth}{!}{
    \begin{tabular}{l@{\;}l@{\;}c c@{\;}c c} 
    \toprule
          &&
         \multirow{1}{*}{\textbf{\#Params}} &
         \multicolumn{2}{c}{\bf Performance} &
         {\textbf{Throughput}} \\
         \cmidrule(lr){3-3}\cmidrule(lr){4-5}\cmidrule(lr){6-6}
         &\multirow{1}{*}{\textbf{Model}} &  $\times 10^6$&\textbf{CER}{\color{gray!80}$\downarrow$} &\textbf{WER}{\color{gray!80}$\downarrow$} & lines/s \\
    \midrule
      \multirow{3}{*}{\colorbox{Orchid!20}{\scriptsize \rotatebox{90}{LAM}}}
      %&HTR-VT & 53.5 & \textbf{2.8} & \textbf{7.4} & \textbf{607.6}\\ % no FA
      &HTR-VT & 53.5 & \textbf{2.8} & \textbf{7.4} & 620.2\\
      % &\pomicon HTR-VPoM & 53.7 &  3.2 & 9.0 & 586.8 \\ % native pytorch
      &\pomicon HTR-VPoM & 53.7 &  3.2 & 9.0 & \textbf{623.3} \\ % triton
      %&\pomicon  HTR-VPoM Hyb.& 53.5 & \textbf{2.8} & 7.5 & 597.1 \\ % native pytorch
      &\pomicon  HTR-VPoM Hyb.& 53.5 & \textbf{2.8} & 7.5 & 622.7 \\
      \greyrule
      \multirow{3}{*}{\colorbox{YellowOrange!20}{\scriptsize \raisebox{-2mm}{\rotatebox{90}{LAM-Mul.}}}}
      %&HTR-VT & 54.9 & \textbf{3.3}  & \textbf{9.3} & 566.4 \\ no FA
      &HTR-VT & 54.9 & \textbf{3.3}  & \textbf{9.3} & 598.4 \\ % FA
      %&\pomicon  HTR-VPoM & 55.2  & 3.7  & 10.9 & \textbf{645.2} \\ native pytorch
      &\pomicon  HTR-VPoM & 55.2  & 3.7  & 10.9 & \textbf{675.5} \\ % triton
      %&\pomicon  HTR-VPoM Hyb. & 55.1 & \textbf{3.3} & \textbf{9.3} & 600.1  \\ native pytorch
      &\pomicon  HTR-VPoM Hyb. & 55.1 & \textbf{3.3} & \textbf{9.3} & 636.6 \\ % triton
    \bottomrule
    \end{tabular}
    }
    \caption{{\bf Optical Character Recognition.} We compare HTR-VT and HTR-VPoM on LAM single line and LAM multiline dataset. We report the Character Error Rate (CER), Word Error Rate (WER), and throughput (lines/s). The multiline experiments are trained for half the steps.}
    \label{tab:ocr_tab_main}
\end{table}

\paragraph{Setting.} We assess the effectiveness of PoM on a optical character recognition task (OCR). We use the SOTA HTR-VT~\cite{li2025htr}, a ViT based encoder only network, as our baseline. We conduct our experiments on the Ludovico Antonio Muratori (LAM)~\cite{cascianelli2022lam} dataset which consists of 25,823 lines in total. To showcase the benefits of PoM over standard attention we conduct two experiments, specifically, (i) Single line prediction and (ii) Multiple line prediction. Given the HTR-VT network only has four attention blocks we experiment with two type of PoM architecture: (i) HTR-VPoM: replacing all the attention layers from HTR-VT to PoM layers and (ii) HTR-VPoM Hybrid: alternating attention layers and PoM layers that perform full attention (no causal mask).

\paragraph{Results.} We report the scores in \cref{tab:ocr_tab_main}. For the single-line prediction experiments, PoM based architectures are on-par with attention regarding CER (character error rate) and WER (word error rate). However, given the sequence length is just 128 tokens, attention benefits from the low occupancy level of the GPU to compensate for its quadratic cost. 
To take advantage of the linear cost of PoM, we also consider predicting entire pages at a time, consisting of multiple lines.
We concatenate lines from the same page to produce a wide image, similar to unrolling the page horizontally. This adds up to 16 lines, or about 2k tokens on average.
On this multi-line setup, the performances of HTR-VT are slightly degraded (3.3 CER \textit{vs} 2.8 originally) which may come from underfitting a larger input signal. HTR-VPoM Hybrid is on par with the original architecture, while increasing throughput by about 6\%. HTR-VPoM obtains 0.4 worst CER but achieves a significant 12\% speedup compared to Flash-Attention. Interestingly, the throughput of PoM models is higher in the multi-line setup than in the single line setup\footnote{We assume this is due to the GPU being under-occupied and the latency having a bigger impact.}, the opposite of attention models. This demonstrates the superior scalability of PoM compared to attention, even with Flash-Attention.

%========================================
\subsection{Earth Observation Analysis}
\label{sec:eo}
%========================================
\paragraph{Setting.}
We evaluate PoM on a standard geospatial task: crop type mapping from satellite multi-spectral image time series. 
Experiments are conducted on fold~1 of the PASTIS dataset~\cite{garnot2021panoptic}, which contains Sentinel-2 time series with 10 spectral bands at a spatial resolution of $10$\,m per pixel. 
Each pixel is annotated with one of 19 crop type classes. 
We report overall accuracy (OA), mean Intersection over Union (mIoU), FLOPs, and throughput.  

Our baseline is the TSViT model~\cite{Tarasiou_2023_CVPR}, a state-of-the-art architecture for supervised crop segmentation that alternates between temporal and spatial self-attention layers. 
We construct \textbf{TSViPoM} by replacing all Transformer blocks in TSViT with Polymorpher blocks, keeping the rest of the architecture unchanged.

\paragraph{Results.}
TSViPoM achieves performance comparable to the TSViT baseline while offering a substantial speedup---over 300\% faster inference---demonstrating the efficiency of PoM as a drop-in replacement for attention in spatio-temporal modeling. 
This improvement is particularly relevant for geospatial applications, where inputs often span long temporal sequences, large spatial extents, and multiple spectral bands. 
In such settings, the quadratic scaling of self-attention becomes a major bottleneck, forcing practitioners to crop scenes, reduce temporal resolution, or downsample spectral information. 
By contrast, PoM’s linear complexity allows the model to process long and high-dimensional sequences directly, preserving temporal continuity and spectral richness while maintaining fast and memory-efficient inference.

\begin{table}\small
    \centering
    \begin{tabular}{l@{\;}c c@{\;}c c} 
    \toprule
          &
         \multirow{1}{*}{\textbf{\#Params}} &
         \multicolumn{2}{c}{\bf Performance} &
         {\textbf{Throughput}} \\ \cmidrule(lr){2-2}\cmidrule(lr){3-4}\cmidrule(lr){5-5}
         \multirow{1}{*}{\textbf{Model}} &  $\times 10^6$&\textbf{OA}{\color{gray!80}$\uparrow$} &\textbf{mIoU}{\color{gray!80}$\uparrow$} & km\textsuperscript{2}/s \\
    \midrule
         \textcolor{gray!90}{TSViT (paper)} &\color{gray!90} 1.7 & \color{gray!90} 83.4 & \color{gray!90} 65.1 & \color{gray!90} - \\
         TSViT (retrained) & 1.7 & \bf 82.8 & \bf 64.6 & 7.2 \\\greyrule
         \pomicon TSViPoM  & 1.7 & 82.7 & \bf 64.6 & \bf 26.3 \\
    \bottomrule
    \end{tabular}
    \caption{{\bf Satellite Image Time Series Segmentation.} We compare of TSViT and TSViPoM on PASTIS fold 1. We report the overall accuracy (OA), mean IoU (mIoU), and throughput.}
    \label{tab:geo_results}
\end{table}

%============================================
\subsection{Point Cloud Segmentation}
\label{sec:3d}
%============================================
\begin{table}\small
    \centering
    \setlength{\tabcolsep}{4.5pt}
    \begin{tabular}{l@{\,}l c c c}
        \toprule
         && 
         \textbf{\#Param} &
         \textbf{Performance} &
         \textbf{Throughput} \\
        \cmidrule(lr){3-3}\cmidrule(lr){4-4}\cmidrule(lr){5-5}
        &\textbf{Model} & $\times 10^6$ & 
        \textbf{mIoU}{\color{gray!80}$\uparrow$} &
        $10^6$ pts/s
        \\\midrule
        \multirow{3}{*}{\colorbox{Orchid!20}{\scriptsize \rotatebox{90}{Scannet}}} & PTv3 & 46.2 & 76.7 & 1.98
        \\
        & \pomicon PPoMv3 & 47.9 & 75.9 & \bf 2.17 \\
        &\pomicon PPoMv3 Hyb. & 46.8 &  \bf 76.8 & 2.01
        \\\greyrule 
        \multirow{3}{*}{\colorbox{YellowOrange!20}{\scriptsize \rotatebox{90}{SKITTI}}}&PTv3  & 46.2 &  67.2& 1.12
        \\
        & \pomicon PPoMv3 &  47.9 &  66.6& \bf 1.22\\
        & \pomicon PPoMv3 Hyb. &  46.8 &  \bf 67.5 &  1.15
        \\\bottomrule
    \end{tabular}
    \caption{{\bf 3D Point Cloud Segmentation.} We compare PTv3 with PPoMv3 on two classic 3D datasets.
    }
    \label{tab:3d}
\end{table}

\paragraph{Setting.}
We evaluate PoM for 3D point cloud semantic segmentation on two standard benchmarks: indoor scene understanding with ScanNet~\cite{scannet} and autonomous driving with SemanticKITTI~\cite{semantickitti}. 
Our baseline is the state-of-the-art PointTransformerV3 (PTv3)~\cite{ptv3}, which serializes point clouds along space-filling curves that preserve local geometric continuity. 
These serialized sequences are hierarchically pooled to form a U-Net-like encoder–decoder architecture. 
At each scale, PTv3 applies several multi-head attention (MHA) layers with a local context of 1024 points, a limit imposed by the quadratic cost of attention and GPU memory constraints.

We investigate two PoM-based replacements for the attention modules:
(i) \textbf{PPoMv3}, which replaces all MHA layers with PoM while keeping the same context length, and  
(ii) \textbf{PPoMv3-Hybrid}, which alternates MHA layers, and PoM layers with a context window of 4096 points in the two highest-resolution stages of the U-Net.

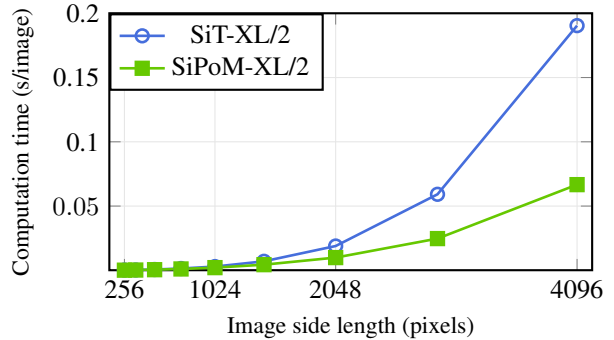
\begin{figure}[!t]
    \centering
    \begin{tikzpicture}
\begin{axis}[
    width=8cm, height=5cm,
    xlabel={\small Image side length (pixels)},
    ylabel={\small Computation time  (s/image)},
    %ymode=log,
    xtick={256,1024,2048,4096},
    xticklabels={256, 1024, 2048, 4096},
    ytick={0.05,0.1,0.15,0.2},
    yticklabels={$0.05$,0.1,0.15,0.2},
    ymin=0.0001, ymax=0.2,
    xmin=128, xmax=4224,
    grid=both,
    grid style={gray!20},
    legend style={at={(0.0,1.0)}, anchor=north west, legend columns=1},
    thick,
    mark options={scale=1.2},
]

% GPT2-L
\addplot[
    color=colTrans,
    mark=o,
    line width=1pt
] coordinates {
(256, {0.0002474415141227837})
(352, {0.00029000554405797627})
(512, {0.0006429552278996198})
(736, {0.0014204099743528786})
(1024, {0.0029289669421632427})
(1440, {0.007026392328025394})
(2048, {0.01891895949142054})
(2912, {0.0591703769122978})
(4096, {0.1903850698551105})
};
\addlegendentry{SiT-XL/2}

% GPPoM2-L
\addplot[
    color=colPOM,
    mark=square*,
    line width=1pt
] coordinates {
(256, {0.0001227446490031525})
(352, {0.0002407291617601004})
(512, {0.000504921072729303})
(736, {0.0010763675006273843})
(1024, {0.002105088429989337})
(1440, {0.004406789621807548})
(2048, {0.009923821208758454})
(2912, {0.024707892990409163})
(4096, {0.06667813312014914})
};
\addlegendentry{SiPoM-XL/2}

\end{axis}
\end{tikzpicture}
    % \vspace{-2mm}
    \caption{{\bf Computational Efficiency.} We report the speed at generating image for various resolutions for SiT-XL/2 and its PoM counterpart SiPoM-XL/2.}
    \label{fig:sipom_speed}
\end{figure}

\begin{figure*}[!tb]
    \centering
    \includegraphics[width=\textwidth]{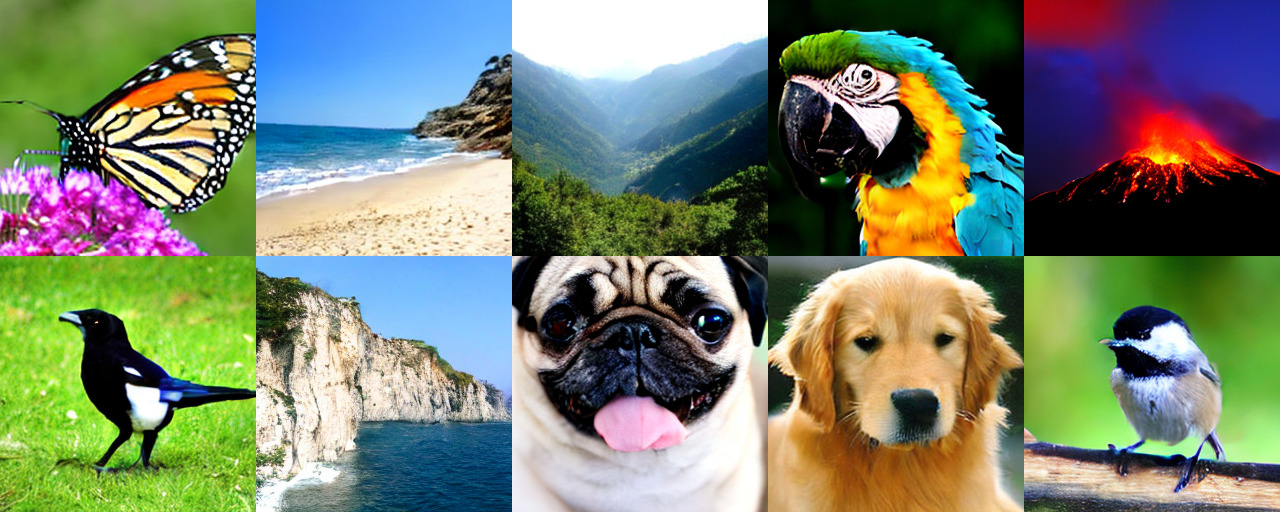}
    \caption{\textbf{Qualitative Results on Class-Conditional Generation}. Images sampled from SiPom-XL/2 with different classes at 256 resolution. We use classifier-free guidance with $\omega=6$.}
    \label{fig:curated}
\end{figure*}

\paragraph{Results.}
Results are reported in~\cref{tab:3d}. 
The hybrid PPoMv3 matches or slightly surpasses PTv3 on both datasets while reducing computation time by 2\%. 
Pure PoM models are up to 10\% faster than Flash-Attention, with only a minor drop in accuracy. 
A practical advantage of PoM’s linear complexity is that entire point clouds can be processed in memory without chunking. 
While this does not translate into accuracy gains on standard CV benchmarks, it is a significant benefit for large-scale industrial applications where clean spatial partitioning such as single rooms is not applicable.

\begin{table}[tb]
    \centering
    \setlength{\tabcolsep}{5pt}
    \small
    \begin{tabular}{l c c c}
        \toprule
         & 
         \textbf{\#Param} &
         \textbf{Performance} &
         \textbf{Throughput} \\
        \cmidrule(lr){2-2}\cmidrule(lr){3-3}\cmidrule(lr){4-4}
        \textbf{Model} & $\times 10^6$ & 
        \textbf{FID}{\color{gray!80}$\downarrow$} &
        img/s
        \\
        \midrule
        SiT-L/2 &  458 & 18.8 &  \hphantom{1,}9842  \\
        \pomicon SiPoM-L/2 &   414 &  \textbf{17.6} &  \bf 11,954 \\
        \greyrule
        SiT-XL/2 & 675 & \textbf{17.2} &  \hphantom{1,}4041 \\
        \pomicon SiPoM-XL/2 & 609 &  \textbf{17.2} & \bf\hphantom{1,}8146  \\
    \bottomrule
    \end{tabular}
    \caption{\textbf{Class-Conditional Image Generation}. We compare SiT~\cite{ma24eccv} with SiPoM trained on ImageNet for generating images of resolution 256$^2$ for a batch size of 1024.}
    \label{tab:img_sota}
\end{table}
%============================================
\subsection{Class-Conditional Image Generation}
\label{sec:imgen}
%============================================
\paragraph{Setting.}
We evaluate PoM on a class-conditional image generation task. For that matter, we take the popular SiT architecture~\cite{ma24eccv} and replace all Attention blocks with PoM blocks. We use the AdaLNZero strategy to condition the network on the class.
We consider 2 sizes of models, namely SiT-L and SiT-XL with a patch-size of 2, leading to our PoM variant named SiPoM-L/2 and SiPoM-XL/2 respectively.
The training is performed on ImageNet at 256 resolution, with the same training parameters as in the original paper.
The performances are evaluated with FID as is standard for this task. The efficiency is measured as throughput, which is the number of images per second that can be processed by a forward pass of the network for a batch size of 1024 images.

\paragraph{Quantitative results}

We compare the results of our SiPoM models to the original scores from \cite{ma24eccv} on Table~\ref{tab:img_sota}.
Our SiPoM variants achieve the same visual fidelity as the original attention-based models, as measured by the FID.
We also measure the throughput of the SiT-XL/2 and the SiPoM-XL/2 models at various resolution, which we report in Figure~\ref{fig:sipom_speed}.
As excepted, SiPoM becomes much more efficient than its attention-based counterpart at higher resolutions, starting at 1024 pixels per side. This is easily explain by the number of tokens growing quadratically with the resolution of the images, leading to longer sequences which further accentuate the gap between linear cost of PoM and the quadratic cost of attention.

\paragraph{Qualitative results}
We show images sampled from SiPoM-XL/2 in Figure~\ref{fig:curated}. The images are sampled using Classifier Free Guidance with $\omega=6$.
The visual quality of the images is on par with equivalent models from the literature.
This validates that SiPoM-XL/2 is able to produce the same quality of pictures as its attention counterpart.

\section{Conclusion}
We introduced \textbf{PoM}, the \emph{Polynomial Mixer}, a neural network building block that serves as a drop-in, efficient replacement for self-attention in Transformer architectures. 
PoM combines \emph{linear complexity} with the \emph{universal sequence-to-sequence approximator}.  
We validated PoM across five diverse domains—natural language processing, Earth observation, 3D point cloud segmentation, optical character recognition, and image generation. We demonstrate that hybrid PoM/self-attention architectures consistently match or surpass the performance of full-attention models while offering superior speed and scalability.  

These properties make PoM a compelling choice for long-sequence applications such as high-definition video generation or multimodal large language models, which could greatly benefit from its $\mathcal{O}(1)$ inference under causal masking. 
While large-scale experiments remain beyond our current computational scope, PoM lays the groundwork for the next generation of efficient, general-purpose, attention-free sequence models.

\section{Acknowledgment}
This work was supported by ANR project sharp ANR-23-PEIA-0008 in
the context of the PEPR IA. This project was provided with computing and storage resources by GENCI at IDRIS thanks to the grants 2025-A0191013085 and 2025-A0181016234 on the supercomputer Jean Zay's H100 partition.
The authors would like to thank Vincent Lepetit,
Gül Varol and Dimitris Samaras for their insightful comments and suggestions.

{
    \small
    \bibliographystyle{ieeenat_fullname}
    {\balance
    {
    \bibliography{pom}

@String(PAMI = {IEEE Trans. Pattern Anal. Mach. Intell.})

@String(CVPR= {IEEE Conf. Comput. Vis. Pattern Recog.})

@String(ICCV= {Int. Conf. Comput. Vis.})

@String(ECCV= {Eur. Conf. Comput. Vis.})

@String(NIPS= {Adv. Neural Inform. Process. Syst.})

@String(ICPR = {Int. Conf. Pattern Recog.})

@String(ICIP = {IEEE Int. Conf. Image Process.})

@String(ICLR = {Int. Conf. Learn. Represent.})

@String(ICML = {Int. Conf. Mach. Learn.})

@String(AAAI = {AAAI})

@String(PAMI  = {IEEE TPAMI})

@String(CVPR  = {CVPR})

@String(ICCV  = {ICCV})

@String(ECCV  = {ECCV})

@String(NIPS  = {NeurIPS})

@String(ICPR  = {ICPR})

@String(ICIP  = {ICIP})

@String(ICLR  = {ICLR})

@inproceedings{picard2011improving,
  title={Improving image similarity with vectors of locally aggregated tensors},
  author={Picard, David and Gosselin, Philippe-Henri},
  booktitle={ICIP},
  pages={669--672},
  year={2011}
}

@inproceedings{jacob2019metric,
  title={Metric learning with horde: High-order regularizer for deep embeddings},
  author={Jacob, Pierre and Picard, David and Histace, Aymeric and Klein, Edouard},
  booktitle=ICCV,
  pages={6539--6548},
  year={2019}
}

@inproceedings{esser24icml,
  title={Scaling rectified flow transformers for high-resolution image synthesis},
  author={Esser, Patrick and Kulal, Sumith and Blattmann, Andreas and Entezari, Rahim and M{\"u}ller, Jonas and Saini, Harry and Levi, Yam and Lorenz, Dominik and Sauer, Axel and Boesel, Frederic and others},
  booktitle=ICML,
year=2024
}

@inproceedings{vaswani17nips,
 author = {Vaswani, Ashish and Shazeer, Noam and Parmar, Niki and Uszkoreit, Jakob and Jones, Llion and Gomez, Aidan N and Kaiser, {\L}ukasz and Polosukhin, Illia},
 booktitle = NIPS,
 title = {Attention is All you Need},
 year = {2017}
}

@article{kaplan2020scaling,
  title={Scaling laws for neural language models},
  author={Kaplan, Jared and McCandlish, Sam and Henighan, Tom and Brown, Tom B and Chess, Benjamin and Child, Rewon and Gray, Scott and Radford, Alec and Wu, Jeffrey and Amodei, Dario},
  journal={arXiv preprint arXiv:2001.08361},
  year={2020}
}

@inproceedings{peebles23iccv,
  title={Scalable diffusion models with transformers},
  author={Peebles, William and Xie, Saining},
  booktitle=ICCV,
  year={2023}
}

@inproceedings{ma24eccv,
  title={{SIT}: {E}xploring flow and diffusion-based generative models with scalable interpolant transformers},
  author={Ma, Nanye and Goldstein, Mark and Albergo, Michael S and Boffi, Nicholas M and Vanden-Eijnden, Eric and Xie, Saining},
  booktitle=ECCV,
  year={2024}
}

@inproceedings{zhai22cvpr,
  title={Scaling vision transformers},
  author={Zhai, Xiaohua and Kolesnikov, Alexander and Houlsby, Neil and Beyer, Lucas},
  booktitle=CVPR,
  year={2022}
}

@inproceedings{yan24cvpr,
  title={Diffusion models without attention},
  author={Yan, Jing Nathan and Gu, Jiatao and Rush, Alexander M},
  booktitle=CVPR,
  year={2024}
}

@InProceedings{hu24eccv,
      title={{ZigMa}: A {DiT}-style Zigzag {Mamba} Diffusion Model},
      author={Vincent Tao Hu and Stefan Andreas Baumann and Ming Gui and Olga Grebenkova and Pingchuan Ma and Johannes Fischer and Björn Ommer},
      booktitle = ECCV,
      year={2024}
}

@inproceedings{gu21iclr,
  title={Efficiently Modeling Long Sequences with Structured State Spaces},
  author={Gu, Albert and Goel, Karan and Re, Christopher},
  booktitle=ICLR,
  year={2021}
}

@inproceedings{gu21nips,
  title={Combining recurrent, convolutional, and continuous-time models with linear state space layers},
  author={Gu, Albert and Johnson, Isys and Goel, Karan and Saab, Khaled and Dao, Tri and Rudra, Atri and R{\'e}, Christopher},
  booktitle=NIPS,
  year={2021}
}

@inproceedings{
Yun20ICLR,
title={Are Transformers universal approximators of sequence-to-sequence functions?},
author={Chulhee Yun and Srinadh Bhojanapalli and Ankit Singh Rawat and Sashank Reddi and Sanjiv Kumar},
booktitle=ICLR,
year={2020}
}

@inproceedings{ho20nips,
  title={Denoising diffusion probabilistic models},
  author={Ho, Jonathan and Jain, Ajay and Abbeel, Pieter},
  booktitle=NIPS,
  year={2020}
}

@inproceedings{nichol21icml,
  title={Improved denoising diffusion probabilistic models},
  author={Nichol, Alexander Quinn and Dhariwal, Prafulla},
  booktitle=ICML,
  year={2021}
}

@inproceedings{song21iclr,
  title={Score-Based Generative Modeling through Stochastic Differential Equations},
  author={Song, Yang and Sohl-Dickstein, Jascha and Kingma, Diederik P and Kumar, Abhishek and Ermon, Stefano and Poole, Ben},
  booktitle=ICLR,
  year={2021}
}

@article{balaji22eDiffI,
    title={{eDiff-I: T}ext-to-Image Diffusion Models with Ensemble of Expert Denoisers},
    author={Yogesh Balaji and Seungjun Nah and Xun Huang and Arash Vahdat and Jiaming Song and Qinsheng Zhang and Karsten Kreis and Miika Aittala and Timo Aila and Samuli Laine and Bryan Catanzaro and Tero Karras and Ming-Yu Liu},
    journal={arXiv:2211.01324},
    year={2022}
}

@inproceedings{karras24cvpr,
  title={Analyzing and improving the training dynamics of diffusion models},
  author={Karras, Tero and Aittala, Miika and Lehtinen, Jaakko and Hellsten, Janne and Aila, Timo and Laine, Samuli},
  booktitle=CVPR,
  year={2024}
}

@article{hang2024improved,
  title={Improved noise schedule for diffusion training},
  author={Hang, Tiankai and Gu, Shuyang},
  journal=ICCV,
  year={2024}
}

@inproceedings{lipman22iclr,
  title={Flow Matching for Generative Modeling},
  author={Lipman, Yaron and Chen, Ricky TQ and Ben-Hamu, Heli and Nickel, Maximilian and Le, Matthew},
  booktitle=ICLR,
  year={2022}
}

@inproceedings{shi24nips,
  title={Diffusion {S}chr{\"o}dinger bridge matching},
  author={Shi, Yuyang and De Bortoli, Valentin and Campbell, Andrew and Doucet, Arnaud},
  booktitle=NIPS,
  year={2024}
}

@inproceedings{liu23iclr,
  title={Flow Straight and Fast: {L}earning to Generate and Transfer Data with Rectified Flow},
  author={Liu, Xingchao and Gong, Chengyue and others},
  booktitle=ICLR,
  year={2023}
}

@inproceedings{rombach22cvpr,
  title={High-resolution image synthesis with latent diffusion models},
  author={Rombach, Robin and Blattmann, Andreas and Lorenz, Dominik and Esser, Patrick and Ommer, Bj{\"o}rn},
  booktitle=CVPR,
  year={2022}
}

@inproceedings{jaegle22iclr,
  title={Perceiver {IO}: {A} General Architecture for Structured Inputs \& Outputs},
  author={Jaegle, Andrew and Borgeaud, Sebastian and Alayrac, Jean-Baptiste and Doersch, Carl and Ionescu, Catalin and Ding, David and Koppula, Skanda and Zoran, Daniel and Brock, Andrew and Shelhamer, Evan and others},
  booktitle=ICLR,
  year={2022}
}

@inproceedings{jabri23icml,
  title={Scalable adaptive computation for iterative generation},
  author={Jabri, Allan and Fleet, David J and Chen, Ting},
  booktitle=ICML,
  year={2023}
}

@inproceedings{dufour24cvpr,
  title={Don't drop your samples! {C}oherence-aware training benefits Conditional diffusion},
  author={Dufour, Nicolas and Besnier, Victor and Kalogeiton, Vicky and Picard, David},
  booktitle=CVPR,
  year={2024}
}

@article{yang2024cogvideox,
  title={Cog{V}ideo{X}: {T}ext-to-video diffusion models with an expert transformer},
  author={Yang, Zhuoyi and Teng, Jiayan and Zheng, Wendi and Ding, Ming and Huang, Shiyu and Xu, Jiazheng and Yang, Yuanming and Hong, Wenyi and Zhang, Xiaohan and Feng, Guanyu and others},
  journal=ICLR,
  year={2025}
}

@inproceedings{kitaev2020reformer,
    title       = {Re{F}ormer: {T}he Efficient Transformer},
    author      = {Nikita Kitaev and Lukasz Kaiser and Anselm Levskaya},
    booktitle   = ICLR,
    year        = {2020}
}

@article{wang2020linformer,
  title={Lin{F}ormer: {S}elf-attention with linear complexity},
  author={Wang, Sinong and Li, Belinda Z and Khabsa, Madian and Fang, Han and Ma, Hao},
  journal={arXiv:2006.04768},
  year={2020}
}

@article{child2019generating,
  title={Generating long sequences with sparse transformers},
  author={Child, Rewon and Gray, Scott and Radford, Alec and Sutskever, Ilya},
  journal={arXiv:1904.10509},
  year={2019}
}

@article{achiam2023gpt,
  title={{GPT-4} technical report},
  author={Achiam, Josh and Adler, Steven and Agarwal, Sandhini and Ahmad, Lama and Akkaya, Ilge and Aleman, Florencia Leoni and Almeida, Diogo and Altenschmidt, Janko and Altman, Sam and Anadkat, Shyamal and others},
  journal={arXiv:2303.08774},
  year={2023}
}

@article{team2023gemini,
  title={{GEMINI}: {A} family of highly capable multimodal models},
  author={Team, Gemini and Anil, Rohan and Borgeaud, Sebastian and Alayrac, Jean-Baptiste and Yu, Jiahui and Soricut, Radu and Schalkwyk, Johan and Dai, Andrew M and Hauth, Anja and Millican, Katie and others},
  journal={arXiv:2312.11805},
  year={2023}
}

@article{dubey2024llama,
  title={The {LLAMA} 3 herd of models},
  author={Dubey, Abhimanyu and Jauhri, Abhinav and Pandey, Abhinav and Kadian, Abhishek and Al-Dahle, Ahmad and Letman, Aiesha and Mathur, Akhil and Schelten, Alan and Yang, Amy and Fan, Angela and others},
  journal={arXiv:2407.21783},
  year={2024}
}

@inproceedings{dao22nips,
  title={Flash{A}ttention: {F}ast and memory-efficient exact attention with io-awareness},
  author={Dao, Tri and Fu, Dan and Ermon, Stefano and Rudra, Atri and R{\'e}, Christopher},
  booktitle=NIPS,
  year={2022}
}

@article{dao2023flashattention,
  title={Flash{A}ttention-2: {F}aster attention with better parallelism and work partitioning},
  author={Dao, Tri},
  journal=ICLR,
  year={2024}
}

@article{brandon2024reducing,
  title={Reducing Transformer Key-Value Cache Size with Cross-Layer Attention},
  author={Brandon, William and Mishra, Mayank and Nrusimha, Aniruddha and Panda, Rameswar and Kelly, Jonathan Ragan},
  journal=NIPS,
  year={2024}
}

@article{luohe2024keep,
  title={Keep the Cost Down: {A} Review on Methods to Optimize {LLM}'s {KV}-Cache Consumption},
  author={Luohe, Shi and Hongyi, Zhang and Yao, Yao and Zuchao, Li and Hai, Zhao},
  journal={COLM},
  year={2024}
}

@inproceedings{tolstikhin21nips,
  title={{MLP}-{M}ixer: {A}n all-{MLP} architecture for vision},
  author={Tolstikhin, Ilya O and Houlsby, Neil and Kolesnikov, Alexander and Beyer, Lucas and Zhai, Xiaohua and Unterthiner, Thomas and Yung, Jessica and Steiner, Andreas and Keysers, Daniel and Uszkoreit, Jakob and others},
  booktitle=NIPS,
  year={2021}
}

@article{touvron2022resmlp,
  title={Res{MLP}: {F}eedforward networks for image classification with data-efficient training},
  author={Touvron, Hugo and Bojanowski, Piotr and Caron, Mathilde and Cord, Matthieu and El-Nouby, Alaaeldin and Grave, Edouard and Izacard, Gautier and Joulin, Armand and Synnaeve, Gabriel and Verbeek, Jakob and others},
  journal=PAMI,
  year={2022}
}

@article{zuo2024falcon,
  title={Falcon {M}amba: {T}he First Competitive Attention-free {7B} Language Model},
  author={Zuo, Jingwei and Velikanov, Maksim and Rhaiem, Dhia Eddine and Chahed, Ilyas and Belkada, Younes and Kunsch, Guillaume and Hacid, Hakim},
  journal={arXiv:2410.05355},
  year={2024}
}

@inproceedings{dao24icml,
  title={Transformers are {SSMs}: {G}eneralized Models and Efficient Algorithms Through Structured State Space Duality},
  author={Dao, Tri and Gu, Albert},
  booktitle=ICML,
year={2024}
}

@article{lieber2024jamba,
  title={Jamba: {A} hybrid transformer-{M}amba language model},
  author={Lieber, Opher and Lenz, Barak and Bata, Hofit and Cohen, Gal and Osin, Jhonathan and Dalmedigos, Itay and Safahi, Erez and Meirom, Shaked and Belinkov, Yonatan and Shalev-Shwartz, Shai and others},
  journal=ICLR,
  year={2025}
}

@article{glorioso2024zamba,
  title={Zamba: {A} Compact {7B} {SSM} Hybrid Model},
  author={Glorioso, Paolo and Anthony, Quentin and Tokpanov, Yury and Whittington, James and Pilault, Jonathan and Ibrahim, Adam and Millidge, Beren},
  journal={arXiv:2405.16712},
  year={2024}
}

@article{zhu2024vision,
  title={Vision {M}amba: {E}fficient visual representation learning with bidirectional state space model},
  author={Zhu, Lianghui and Liao, Bencheng and Zhang, Qian and Wang, Xinlong and Liu, Wenyu and Wang, Xinggang},
  journal={arXiv:2401.09417},
  year={2024}
}

@misc{liu2024vmambavisualstatespace,
      title={VMamba: {V}isual State Space Model}, 
      author={Yue Liu and Yunjie Tian and Yuzhong Zhao and Hongtian Yu and Lingxi Xie and Yaowei Wang and Qixiang Ye and Yunfan Liu},
      journal=NIPS,
      year={2024}
}

@misc{pei2024efficientvmambaatrousselectivescan,
      title={Efficient{VM}amba: {A}trous Selective Scan for Light Weight Visual Mamba}, 
      author={Xiaohuan Pei and Tao Huang and Chang Xu},
      journal=AAAI,
      year={2025}
}

@article{gao2024lumina,
  title={{Lumina-T2X: T}ransforming Text into Any Modality, Resolution, and Duration via Flow-based Large Diffusion Transformers},
  author={Gao, Peng and Zhuo, Le and Lin, Ziyi and Liu, Chris and Chen, Junsong and Du, Ruoyi and Xie, Enze and Luo, Xu and Qiu, Longtian and Zhang, Yuhang and others},
  journal={arXiv:2405.05945},
  year={2024}
}

@InProceedings{Liu_2024_CVPR,
    author    = {Liu, Yujian and Zhang, Yang and Jaakkola, Tommi and Chang, Shiyu},
    title     = {Correcting Diffusion Generation through Resampling},
    booktitle = CVPR,
    year      = {2024}
}

@InProceedings{Gokaslan_2024_CVPR,
    author    = {Gokaslan, Aaron and Cooper, A. Feder and Collins, Jasmine and Seguin, Landan and Jacobson, Austin and Patel, Mihir and Frankle, Jonathan and Stephenson, Cory and Kuleshov, Volodymyr},
    title     = {Common{C}anvas: {O}pen Diffusion Models Trained on Creative-Commons Images},
    booktitle = CVPR,
    year      = {2024}
}

@InProceedings{Wallace_2024_CVPR,
    author    = {Wallace, Bram and Dang, Meihua and Rafailov, Rafael and Zhou, Linqi and Lou, Aaron and Purushwalkam, Senthil and Ermon, Stefano and Xiong, Caiming and Joty, Shafiq and Naik, Nikhil},
    title     = {Diffusion Model Alignment Using Direct Preference Optimization},
    booktitle = CVPR,
    year      = {2024}
}

@InProceedings{Si_2024_CVPR,
    author    = {Si, Chenyang and Huang, Ziqi and Jiang, Yuming and Liu, Ziwei},
    title     = {Free{U}: {F}ree Lunch in Diffusion {U}-Net},
    booktitle = CVPR,
    year      = {2024}
}

@InProceedings{Karras_2024_CVPR,
    author    = {Karras, Tero and Aittala, Miika and Lehtinen, Jaakko and Hellsten, Janne and Aila, Timo and Laine, Samuli},
    title     = {Analyzing and Improving the Training Dynamics of Diffusion Models},
    booktitle = CVPR,
    year      = {2024}
}

@inproceedings{hatamizadeh2025diffit,
  title={Dif{F}it: {D}iffusion vision transformers for image generation},
  author={Hatamizadeh, Ali and Song, Jiaming and Liu, Guilin and Kautz, Jan and Vahdat, Arash},
  booktitle=ECCV,
  year={2024}
}

@inproceedings{wei2024powerful,
  title={Powerful and Flexible: {P}ersonalized Text-to-Image Generation via Reinforcement Learning},
  author={Wei, Fanyue and Zeng, Wei and Li, Zhenyang and Yin, Dawei and Duan, Lixin and Li, Wen},
  booktitle=ECCV,
  year={2024}
}

@inproceedings{chen2024pixart,
  title={Pixart-$\backslash$sigma: {W}eak-to-strong training of diffusion transformer for 4k text-to-image generation},
  author={Chen, Junsong and Ge, Chongjian and Xie, Enze and Wu, Yue and Yao, Lewei and Ren, Xiaozhe and Wang, Zhongdao and Luo, Ping and Lu, Huchuan and Li, Zhenguo},
  booktitle=ECCV,
  year={2024}
}

@inproceedings{lee2025parrot,
  title={Parrot: {P}areto-optimal multi-reward reinforcement learning framework for text-to-image generation},
  author={Lee, Seung Hyun and Li, Yinxiao and Ke, Junjie and Yoo, Innfarn and Zhang, Han and Yu, Jiahui and Wang, Qifei and Deng, Fei and Entis, Glenn and He, Junfeng and others},
  booktitle=ECCV,
  year={2025}
}

@InProceedings{Zhou_2024_CVPR,
    author    = {Zhou, Zhenyu and Chen, Defang and Wang, Can and Chen, Chun},
    title     = {Fast ODE-based Sampling for Diffusion Models in Around 5 Steps},
    booktitle = CVPR,
    year      = {2024}
}

@InProceedings{Bai_2024_CVPR,
    author    = {Bai, Xingjian and Melas-Kyriazi, Luke},
    title     = {Fixed Point Diffusion Models},
    booktitle = CVPR,
    year      = {2024}
}

@inproceedings{zhao2023mobilediffusion,
  title={Mobile{D}iffusion: {I}nstant text-to-image generation on mobile devices},
  author={Zhao, Yang and Xu, Yanwu and Xiao, Zhisheng and Jia, Haolin and Hou, Tingbo},
  booktitle=ECCV,
  year={2024}
}

@article{chen2023fit,
  title={{FIT: F}ar-reaching Interleaved Transformers},
  author={Chen, Ting and Li, Lala},
  journal={arXiv:2305.12689},
  year={2023}
}

@inproceedings{crowson2024scalable,
  title={Scalable high-resolution pixel-space image synthesis with hourglass diffusion transformers},
  author={Crowson, Katherine and Baumann, Stefan Andreas and Birch, Alex and Abraham, Tanishq Mathew and Kaplan, Daniel Z and Shippole, Enrico},
  booktitle=ICML,
  year={2024}
}

@inproceedings{gu2023matryoshka,
  title={Matryoshka diffusion models},
  author={Gu, Jiatao and Zhai, Shuangfei and Zhang, Yizhe and Susskind, Joshua M and Jaitly, Navdeep},
  booktitle=ICLR,
  year={2023}
}

@article{garnot2021panoptic,
  title={Panoptic Segmentation of Satellite Image Time Series
with Convolutional Temporal Attention Networks},
  author={Sainte Fare Garnot, Vivien  and Landrieu, Loic },
  journal=ICCV,
  year={2021}
}

@InProceedings{Tarasiou_2023_CVPR,
    author    = {Tarasiou, Michail and Chavez, Erik and Zafeiriou, Stefanos},
    title     = {{ViTs} for {SITS}: {V}ision Transformers for Satellite Image Time Series},
    booktitle = CVPR,
    year      = {2023}
}

@inproceedings{dosovitskiy2020image,
  title={An Image is Worth 16x16 Words: {T}ransformers for Image Recognition at Scale},
  author={Dosovitskiy, Alexey and Beyer, Lucas and Kolesnikov, Alexander and Weissenborn, Dirk and Zhai, Xiaohua and Unterthiner, Thomas and Dehghani, Mostafa and Minderer, Matthias and Heigold, Georg and Gelly, Sylvain and others},
  booktitle=ICLR,
  year={2020}
}

@inproceedings{semantickitti,
  author = {Behley, Jens and Garbade, Martin and Milioto, Andres and Quenzel, Jan and Behnke, Sven and Stachniss, Cyrill and Gall, Jurgen},
  title = {{SemanticKITTI}: {A} Dataset for Semantic Scene Understanding of {LiDAR} Sequences},
  booktitle = ICCV,
  year = 2019
}

@inproceedings{ptv3,
    title={Point {T}ransformer V3: {S}impler, Faster, Stronger},
    author={Wu, Xiaoyang and Jiang, Li and Wang, Peng-Shuai and Liu, Zhijian and Liu, Xihui and Qiao, Yu and Ouyang, Wanli and He, Tong and Zhao, Hengshuang},
    booktitle=CVPR,
    year={2024}
}

@inproceedings{scannet,
  title        = {{ScanNet}: {R}ichly-annotated {3D} Reconstructions of Indoor Scenes},
  author       = {Dai, Angela and Chang, Angel X. and Savva, Manolis and Halber, Maciej and Funkhouser, Thomas and Niessner, Matthias},
  year         = 2017,
  booktitle    = CVPR
}

@article{clark2018think,
  title={Think you have solved question answering? {T}try {ARC}, the {AI2} reasoning challenge},
  author={Clark, Peter and Cowhey, Isaac and Etzioni, Oren and Khot, Tushar and Sabharwal, Ashish and Schoenick, Carissa and Tafjord, Oyvind},
  journal={arXiv:1803.05457},
  year={2018}
}

@inproceedings{hendrycksmeasuring,
  title={Measuring Massive Multitask Language Understanding},
  author={Hendrycks, Dan and Burns, Collin and Basart, Steven and Zou, Andy and Mazeika, Mantas and Song, Dawn and Steinhardt, Jacob},
  booktitle=ICLR,
year={2021}
}

@inproceedings{dao2022flashattention,
  title={Flash{A}ttention: {F}ast and Memory-Efficient Exact Attention with {IO}-Awareness},
  author={Dao, Tri and Fu, Daniel Y. and Ermon, Stefano and Rudra, Atri and R{\'e}, Christopher},
  booktitle=NIPS,
  year={2022}
}

@article{penedo2024fineweb,
  title={The fineweb datasets: {D}ecanting the web for the finest text data at scale},
  author={Penedo, Guilherme and Kydl{\'\i}{\v{c}}ek, Hynek and Lozhkov, Anton and Mitchell, Margaret and Raffel, Colin A and Von Werra, Leandro and Wolf, Thomas and others},
  journal=NIPS,
  year={2024}
}

@misc{Karpathy2022,
  author = {Andrej Karpathy},
  title = {\text{NanoGPT}},
  year = {2022},
  publisher = {GitHub},
  journal = {GitHub repository},
  howpublished = {\url{https://github.com/karpathy/nanoGPT}},
  commit = {393a43d9a5c22450bbf06e78da2cb6eeef084b717}
}

@article{hochreiter1997long,
  title={Long short-term memory},
  author={Hochreiter, Sepp and Schmidhuber, J{\"u}rgen},
  journal={Neural computation},
  year={1997},
  publisher={MIT press}
}

@article{sakaguchi2021winogrande,
  title={Winogrande: {A}n adversarial winograd schema challenge at scale},
  author={Sakaguchi, Keisuke and Bras, Ronan Le and Bhagavatula, Chandra and Choi, Yejin},
  journal={Communications of the ACM},
  year={2021},
  publisher={ACM New York, NY, USA}
}

@article{li2025htr,
  title={{HTR-VT: H}andwritten text recognition with vision transformer},
  author={Li, Yuting and Chen, Dexiong and Tang, Tinglong and Shen, Xi},
  journal={Pattern Recognition},
  year={2025},
  publisher={Elsevier}
}

@inproceedings{cascianelli2022lam,
  title={The {LAM} dataset: {A} novel benchmark for line-level handwritten text recognition},
  author={Cascianelli, Silvia and Pippi, Vittorio and Maarand, Martin and Cornia, Marcella and Baraldi, Lorenzo and Kermorvant, Christopher and Cucchiara, Rita},
  booktitle={ICPR},
  year={2022}
}

@article{radford2019language,
  title={Language models are unsupervised multitask learners},
  author={Radford, Alec and Wu, Jeffrey and Child, Rewon and Luan, David and Amodei, Dario and Sutskever, Ilya and others},
  journal={OpenAI blog},
  year={2019}
}

@article{team2024gemma,
  title={{Gemma 2: I}mproving open language models at a practical size},
  author={Team, Gemma and Riviere, Morgane and Pathak, Shreya and Sessa, Pier Giuseppe and Hardin, Cassidy and Bhupatiraju, Surya and Hussenot, L{\'e}onard and Mesnard, Thomas and Shahriari, Bobak and Ram{\'e}, Alexandre and others},
  journal={arXiv:2408.00118},
  year={2024}
}

@article{defossez2024moshi,
  title={{Moshi: A} speech-text foundation model for real-time dialogue},
  author={D{\'e}fossez, Alexandre and Mazar{\'e}, Laurent and Orsini, Manu and Royer, Am{\'e}lie and P{\'e}rez, Patrick and J{\'e}gou, Herv{\'e} and Grave, Edouard and Zeghidour, Neil},
  journal={arXiv:2410.00037},
  year={2024}
}

@article{zhang2023google,
  title={{Google USM: S}caling automatic speech recognition beyond 100 languages},
  author={Zhang, Yu and Han, Wei and Qin, James and Wang, Yongqiang and Bapna, Ankur and Chen, Zhehuai and Chen, Nanxin and Li, Bo and Axelrod, Vera and Wang, Gary and others},
  journal={arXiv:2303.01037},
  year={2023}
}

@article{baevski2020wav2vec,
  title={{wav2vec 2.0: A} framework for self-supervised learning of speech representations},
  author={Baevski, Alexei and Zhou, Yuhao and Mohamed, Abdelrahman and Auli, Michael},
  journal=NIPS,
  year={2020}
}

@inproceedings{chiu2022bestrq,
  title={Self-supervised learning with random-projection quantizer for speech recognition},
  author={Chiu, Chung-Cheng and Qin, James and Zhang, Yu and Yu, Jiahui and Wu, Yonghui},
  booktitle={International Conference on Machine Learning},
  year={2022},
  organization={PMLR}
}

@article{zellers2019hellaswag,
  title={Hella{S}wag: {C}an a machine really finish your sentence?},
  author={Zellers, Rowan and Holtzman, Ari and Bisk, Yonatan and Farhadi, Ali and Choi, Yejin},
  journal={arXiv:1905.07830},
  year={2019}
}
    }
    }
}
\clearpage
\maketitlesupplementary
\appendix

In this appendix, we report additional ablations  (\cref{sec:sup:ablation}) and analysis 
(\cref{sec:sup:analysis}), and a proof of our main theoretical results (\cref{sec:sup:proof}). We also propose an extended related work in depth (\cref{sec:sup:related}) and additional illustrations for the image generation experiment (\cref{sec:sup:quali}.
)

%============================
\section{Additional Ablations.}
\label{sec:sup:ablation}
%=============================

\paragraph{Higher Orders k.}
We study the effect of increasing the polynomial degree $k$ in \cref{tab:additonal}. 
Higher-order polynomials provide a modest improvement, at the cost of a slight speed reduction. 
The limited impact is expected, as the sequences evaluated in classic benchmarks are relatively short and do not fully stress the model’s representational capacity. 
As suggested by our theoretical analysis, higher-order interactions could become more critical for distinguishing longer sequences.

\paragraph{Additional Baselines.}
We evaluate competing linear attention models and report their performance in \cref{tab:additonal}.
\begin{itemize}
    \item \emph{Performer.}
We attempted to train a GPT2-S model using Performer; however, both \texttt{pytorch-performer} and the Performer implementation from \texttt{torch\_geometric} systematically diverged, producing NaN losses after a few iterations. 
At present, we are not aware of a well-maintained Performer implementation suitable for large-scale language model training, which prevented a fair comparison.
\item \emph{Mamba.}
We trained a Mamba model of comparable size (124M parameters) on the NLP task; results are reported in \cref{tab:additonal}. 
As expected, Mamba achieves excellent speed due to its $\mathcal{O}(1)$ complexity and dedicated CUDA kernel. 
PoM shares the same $\mathcal{O}(1)$ theoretical complexity, and we therefore expect that a dedicated CUDA implementation would yield comparable speedups.
In terms of accuracy, however, \textbf{Mamba performs below MHA and PoM} on the evaluated benchmarks. 
We used the same classic training recipe inherited from attention-based models for all methods, which suggests that Mamba may require more tuning adaptation than PoM. 
We also observed that \textbf{Mamba consumed approximately $4\times$ more memory} during training, forcing smaller batch sizes and gradient accumulation, whereas PoM and MHA have a similar memory footprint.
\end{itemize}

%============================
\section{Additional Analysis.}
\label{sec:sup:analysis}
%=============================

\paragraph{Comparison to FlashAttention.}
FlashAttention is a highly engineered implementation of attention, relying on custom CUDA kernels and carefully optimized memory management. 
In contrast, PoM is currently implemented entirely in high-level PyTorch, and yet already achieves consistent 2--4$\times$ speedups across tasks, and in some settings (notably NLP, see \cref{tab:additonal}, and image generation), PoM is faster than Flash-Attention. 
We therefore view these results as a conservative estimate of PoM’s speed, and expect further gains with dedicated low-level optimization.

\paragraph{Generalization Under Train--Test Length Mismatch.}
Since $H(x)$ is obtained by pooling over tokens, its dimensionality is independent of the sequence length. 
In our NLP experiments, PoM is trained with a fixed sequence length of 2048 tokens (with padding), while evaluation on HellaSwag involves sequences ranging from 48 to 742 tokens, with no observable impact on performance.

\begin{table}[t]
    \caption{{\bf Additional ablations and baselines.} MHA throughput uses Flash-Attention, Performer uses \texttt{torch\_geometric}.}
    \label{tab:additonal}
    \centering
    \resizebox{\linewidth}{!}{
    \begin{tabular}{lcccc}
    \toprule
     &
     {\makecell[c]{\textbf{ARC-easy} \\ \textbf{Acc.Norm}$\uparrow$}} 
     &
     {\makecell[c]{\textbf{HellaSwag} \\ \textbf{Acc.Norm}$\uparrow$}} 
     &
      {\makecell[c]{\textbf{Winogrande} \\ \textbf{Acc.}$\uparrow$}}
      &
      {\makecell[c]{\textbf{Throughput} \\ \textbf{\@ 4096 tok., tok/s}$\uparrow$}}
      \\\midrule
      PoM-hyb k=2 & 29.0          & 33.8          & \textbf{51.9} & 163 \\
      PoM-hyb k=3 & \textbf{29.5} & 33.4          & 49.6 & 161 \\
      PoM-hyb k=5 & 28.6          & \textbf{33.9} & 49.5 & 153 \\[-0.8mm]\greyrule
      MHA         & 29.4          & 33.3          & 49.4 &  42   \\
      Performer   & \multicolumn{3}{c}{training failed} & 59 \\
      Mamba       & 29.3          &  30.8         & 48.0 & \textbf{384} \\
      \bottomrule
    \end{tabular}
    }
    % \vspace{-5mm}
\end{table}

\section{Proof of Lemma 3}
\label{sec:sup:proof}
    We first need to show that set with different entries are mapped to different vectors. We first separate $\PoM$ into its two components:
    \begin{align}
        s(X) = \sigma(W_s X)\\
        H(X) &= \left[\sum_{p=1}^k \bm{\alpha}_p \odot h(W_h X)^p\right]\bm{1}\\
        \PoM(X) &= W_o \left[\sigma(W_s X) \odot H(X)\right]
    \end{align}

    Assuming $\ker(W_o) = \emptyset$, and noting that $H_k(X)$ is the same for every column, we just have to show that $s(X)$ has different columns. This is easily achieved by having $\ker(W_s) = \emptyset$ since $\sigma$ is injective and the composition of injective functions is itself injective.

    Second, we have to show that sets that differ by at least one element are mapped to all different entries. To simplify notations, we will consider the special case where all matrices are the identity or an identity block positioned such as to perform submatrix selection. All the matrices can thus be removed from the formula. A similar argument can be made for matrices that are full rank as they preserve injectivity. We will also consider linear activations everywhere, which can be made as close as one wish by partitioning the image of the activation function and performing piecewise linear approximation.

    With this simplified version of $\PoM$, we have to show that for 2 sets $X,X'$ differing by at least one element (i.e., $\exists x' \in X', \forall x\in X, x \neq x'$), then there exist $k$ such that
    \begin{align}
        \forall x\in X, x'\in X', x\sum_{x_i\in X} x_i^k \neq x\sum_{x_i\in X} x_i^k.
    \end{align}

    Consider the functions $P(t)$ and $P'(t)$ defined as follows:
    \begin{align}
        P(t)=\sum_{x_i in X} x_i^t\\
        P'(t)=\sum_{x_i in X'} x_i^t
    \end{align}

    Since $X$ and $X'$ differ by at least one element, there exists at least one $x_i \in X$ such that $x_i \neq x_i', \forall x_i' \in X'$. This implies that the functions $P(t)$ and $P'(t)$ are not identical since are sums of exponentials with different bases.

    Since $P(t)$ and $P'(t)$ are different functions, there must exist some $k$ for which $P(k) \neq P'(k)$. In other words, there exists a $k$ such that:
    \begin{align}
        \sum_{x_i\in X} x_i^k \neq \sum_{x_i' \in X'} x_i'^k
    \end{align}
    
    For this $k$, let us denote $S_k = \sum_{x_i\in X}x_i^k$. We need to show that $xS_k \neq x'S_k'$ for all $x\in X$ and $x'\in X'$.
    Assume for the sake of contradiction that there exist $x\in X$ and $x'\in X'$ such that $xS_k = x'S_k'$. This implies:
    \begin{align}
        x \sum_{x_i \in X} x_i^k = x' \sum_{x_i' \in X'} x_i'^k
    \end{align}

    Rearranging, we get:
    \begin{align}
        \frac{x}{x'} =  \frac{\sum_{x_i' \in X'}x_i'^k}{\sum_{x_i\in X}x_i^k}
    \end{align}

    Since $S_k\neq S_k'$, the right-hand side is not equal to 1. However, for this equality to hold for all $x\in X$ and $x'\in X'$, the ratio $x/x'$ would need to be constant for all pairs $(x,x')$, which is not possible given that $X$ and $X'$ differ by at least one element.

    Therefore, there exists a $k$ such that $x S_k\neq x'S_k'$ for all $x\in X$ and $x'\in X'$.

\section{Related work on Diffusion}
\label{sec:sup:related}
\paragraph{Diffusion}
Diffusion models~\cite{ho20nips,nichol21icml,song21iclr} learn a neural operator that produces natural images from noise using a forward-reverse set of processes.
The forward process consists in pushing the distribution of natural images forward to a known distribution, typically Gaussian, which can be done by adding increasing level of noise to the image.
The reverse process does not have an explicit solution, but can be approximated by a neural network by regressing the local inverse of the forward process, \textit{i.e.}, solving
\begin{align}
    &\min_\theta \mathbb{E}_{t\sim \mathcal{U}(0,1)}\left[\|\varepsilon_t - f_\theta(x_t, t)\|^2\right],\\
    &\text{ s.t. } x_t = \alpha_t x_0 + \gamma_t \varepsilon_t,\, \varepsilon_t \sim \mathcal{N}(0,1).
\end{align}
Here, $\alpha_t$ and $\gamma_t$ are chosen such that $x_0$ corresponds to a natural image whereas $x_1$ corresponds to pure Gaussian noise.
A great amount of research has been put into finding better noise schedules ($\alpha_t$ and $\gamma_t$)~\cite{balaji22eDiffI,karras24cvpr,hang2024improved}, or improving the quantity that is regressed~\cite{lipman22iclr,shi24nips,liu23iclr}, keeping the general idea of learning to invert step by step the stochastic differential equation that transforms an image into noise.

For image and generation, most efforts have been poured into designing efficient architectures at the task. While the original DDPM papers~\cite{ho20nips,nichol21icml} sample images in pixel space, making it unsuitable for large resolution, the most groundbreaking improvement was introduced by Stable Diffusion~\cite{rombach22cvpr} with the addition of a variational auto-encoder (VAE) that allows the diffusion process to be performed in a lower dimensional latent space.
Stable Diffusion uses a U-Net architecture complemented by attention layers~\cite{rombach22cvpr,Si_2024_CVPR}.
To benefit more from the scaling properties of transformers~\cite{kaplan2020scaling,zhai22cvpr}, simpler approaches based solely on transformer layers has been proposed in DiT~\cite{peebles23iccv} and the subsequent flow-matching version SiT~\cite{ma24eccv}.
Most modern text-to-image generation models are now based on Transformer layers rather than the U-Net~\cite{hatamizadeh2025diffit,esser24icml,chen2024pixart,gao2024lumina}. \cite{crowson2024scalable, gu2023matryoshka}, train efficient pixel space transformers models by leveraging multiscale training and SwinAttention.
Similarly, RIN~\cite{jabri23icml, chen2023fit} also proposes an approach using attention only, albeit in a Perceiver-IO~\cite{jaegle22iclr} inspired architecture that uses cross-attention to perform most of the computation in a smaller latent space, and has been successfully extended to text-to-image~\cite{dufour24cvpr}.
In addition to architectures and sampling~\cite{Zhou_2024_CVPR,Bai_2024_CVPR,zhao2023mobilediffusion}, the importance of training is also highlighted in recent works, from resampling the training data~\cite{Gokaslan_2024_CVPR,Liu_2024_CVPR} to RL~\cite{Wallace_2024_CVPR,wei2024powerful,lee2025parrot} and model averaging~\cite{Karras_2024_CVPR}.

\section{Uncurated Image Samples}
\label{sec:sup:quali}
To show \pomicon PoM versatility, with train a DiPoM-XL/2 with the diffusion loss instead of the flow-matching loss and show generated samples with CFG $\omega=6$ in the following pages.

\newcommand{\addimages}[1]{%
% Read the image list from the file
\readarraysepchar{,}
\readdef{#1/images.txt}\imageList
\readarray*\imageList\imageArray[-,1]
\begin{tikzpicture}
    % Define the scale factor to fit the figure within the page width
    \pgfmathsetmacro{\scaleFactor}{\textwidth / (5 * 256)}

    % Loop over the first 60 images
    \foreach \i in {1,...,30} {
        \pgfmathsetmacro{\row}{int(floor((\i - 1) / 5))} % Adjust for 1-based indexing
        \pgfmathsetmacro{\col}{int(mod(\i - 1, 5))} % Adjust for 1-based indexing

        % Get the image filename from the array
        \edef\image{\imageArray[\i,1]}
        \pgfmathsetmacro{\xcoord}{\col * 256 * \scaleFactor}
        \pgfmathsetmacro{\ycoord}{-\row * 256 * \scaleFactor}

        % Print the coordinates for debugging
        \typeout{Coordinates: (\xcoord, \ycoord)}
        \pgfmathsetmacro{\imgsize}{256 * \scaleFactor}

        \node[inner sep=0pt] at (\xcoord pt, \ycoord pt) {
            \includegraphics[width=\imgsize pt, height=\imgsize pt]{#1/\image}
        };
    }
\end{tikzpicture}
}

\begin{figure*}[p]
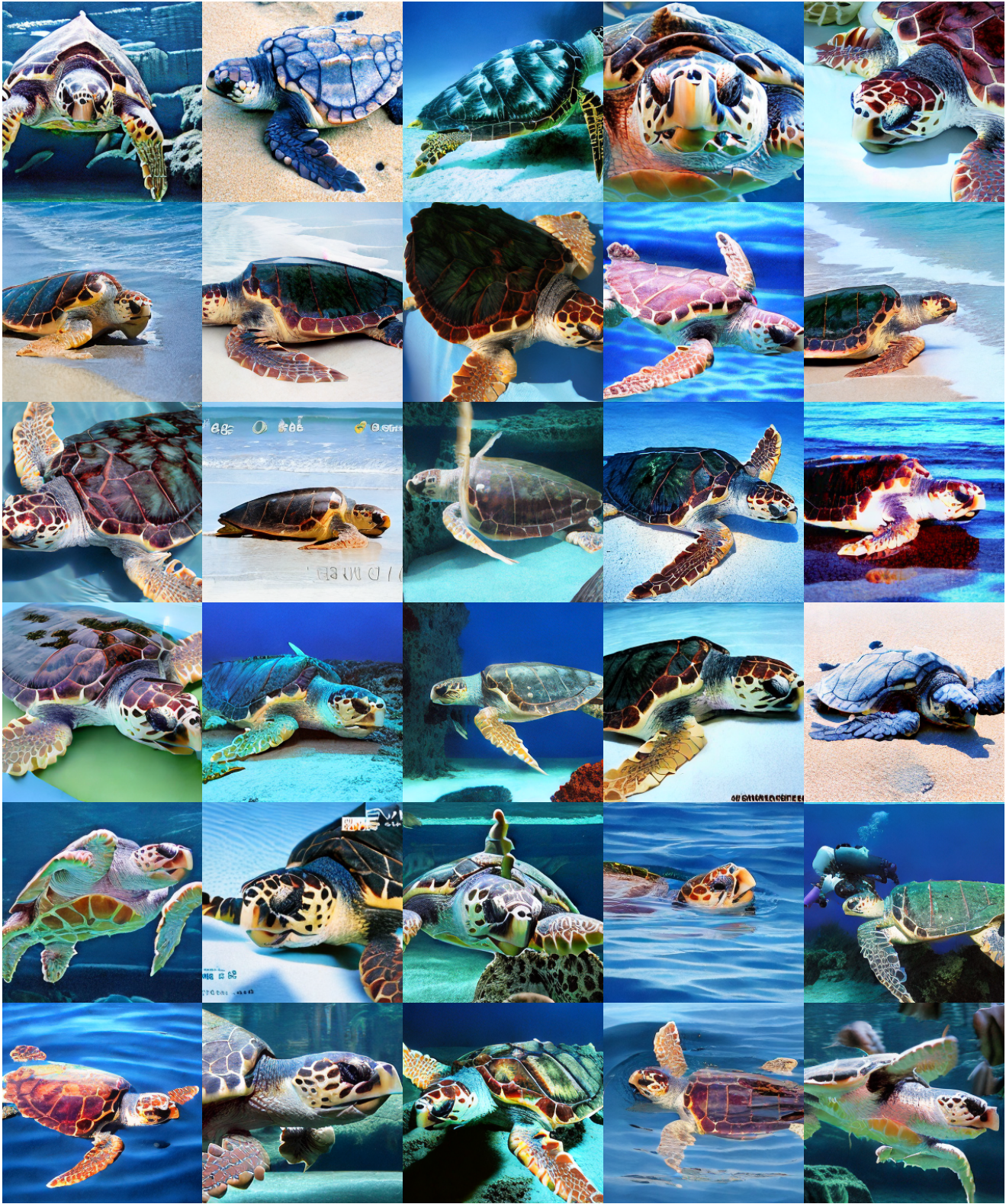

    \begin{center}
    \addimages{images/33/}
    \end{center}
    \caption{Uncurated 256² images for the class \emph{loggerhead, loggerhead turtle, Caretta caretta} (33).}
    \label{fig:turtle}
\end{figure*}

\begin{figure*}[p]
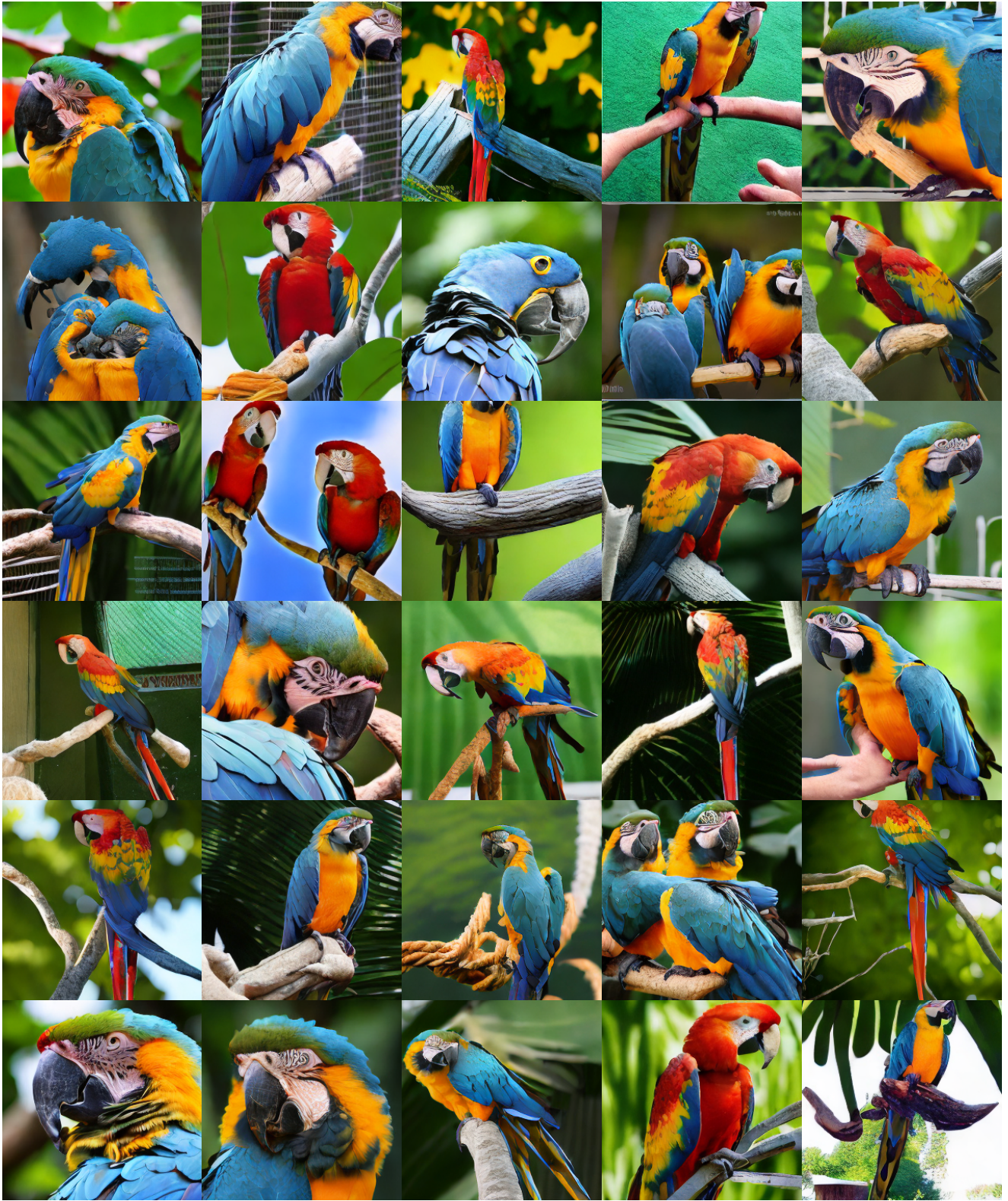

    \begin{center}
    \addimages{images/88/}
    \end{center}
    \caption{Uncurated 256² images for the class \emph{macaw} (88).}
    \label{fig:macaw}
\end{figure*}

\begin{figure*}[p]
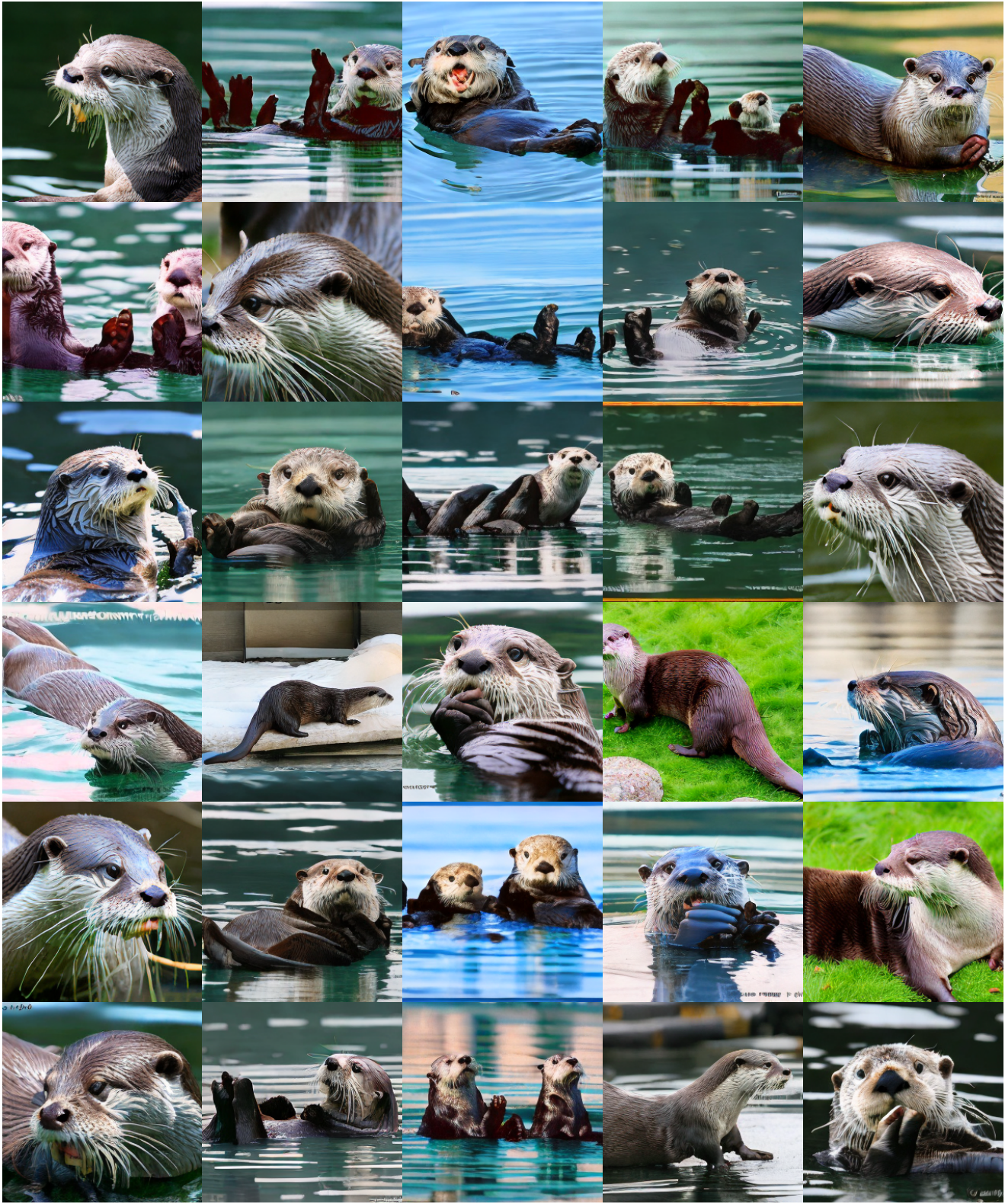

    \begin{center}
    \addimages{images/360/}
    \end{center}
    \caption{Uncurated 256² images for the class \emph{otter} (360).}
    \label{fig:otter}
\end{figure*}

\begin{figure*}[p]
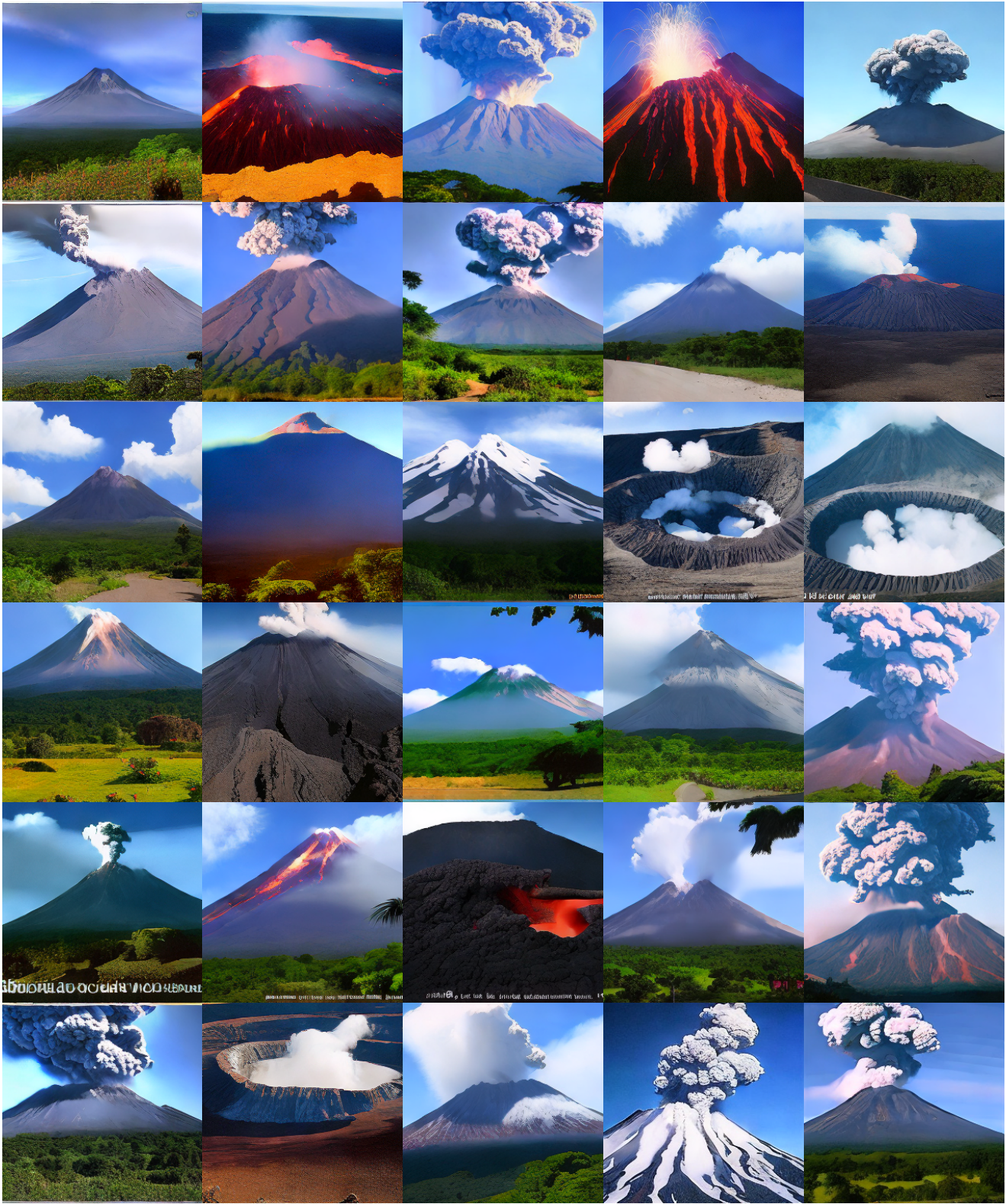

    \begin{center}
    \addimages{images/980/}
    \end{center}
    \caption{Uncurated 256² images for the class \emph{volcano} (980).}
    \label{fig:volcano}
\end{figure*}

\end{document}